\PassOptionsToPackage{cleveref}{jmlrutils}
\documentclass{alt2027}

\jmlrproceedings{}{}
\jmlrvolume{}
\jmlrpages{}
\jmlryear{}

\usepackage[T1]{fontenc}
\usepackage{times}
\usepackage{mathtools}
\usepackage{enumerate}

\numberwithin{theorem}{section}
\aliascntresetthe{lemma}
\aliascntresetthe{proposition}
\aliascntresetthe{corollary}
\aliascntresetthe{definition}
\aliascntresetthe{remark}
\newaliascnt{fact}{theorem}
\newtheorem{fact}[fact]{Fact}
\aliascntresetthe{fact}
\crefname{fact}{fact}{facts}
\Crefname{fact}{Fact}{Facts}
\newaliascnt{claim}{theorem}

\aliascntresetthe{claim}
\crefname{claim}{claim}{claims}
\Crefname{claim}{Claim}{Claims}
\numberwithin{equation}{section}

\renewenvironment{proof}[1][\proofname]
  {\par\noindent{\bfseries\upshape #1\ }\ignorespaces}
  {\jmlrQED}

\newcommand{\calH}{\mathcal{H}}
\newcommand{\sse}{\subseteq}
\newcommand{\eps}{\varepsilon}
\newcommand{\E}{\mathbf{E}}
\renewcommand{\emptyset}{\varnothing}
\newcommand{\cX}{\mathcal{X}}
\newcommand{\cY}{\mathcal{Y}}
\newcommand{\cQ}{\mathcal{Q}}
\newcommand{\cD}{\mathcal{D}}

\newcommand{\cR}{\mathcal{R}}
\renewcommand{\Pr}{\mathbf{Pr}}
\newcommand{\indic}[1]{\mathbf{1}\{#1\}}

\DeclareMathOperator{\Unif}{Unif}
\DeclareMathOperator{\VC}{VC}
\DeclareMathOperator{\Ndim}{Ndim}
\DeclareMathOperator{\depth}{depth}
\DeclareMathOperator*{\argmin}{arg\,min}

\newcommand{\mQ}{m_{\cQ}}
\mathchardef\mhyphen="2D

\title[PAC Learning with Query Feedback]{PAC Learning with Query Feedback}
\altauthor{
 \Name{Chase Hutton} \Email{chutton6@umd.edu}\\
 \addr University of Maryland
 \AND
 \Name{Adam Melrod} \Email{amelrod@math.harvard.edu}\\
 \addr Harvard University
 \AND
 \Name{Han Shao} \Email{hanshao@umd.edu}\\
 \addr University of Maryland
}
\hypersetup{pdftitle={PAC Learning with Query Feedback},pdfauthor={Chase Hutton, Adam Melrod, Han Shao}}

\begin{document}

\maketitle
\begin{abstract}
We study realizable multiclass PAC learning with \emph{query
feedback}, where each training example provides a single binary
response indicating whether its true label belongs to a chosen
subset from a fixed query family. Our main contribution is the
\emph{query dimension}, a combinatorial parameter that captures
both the structure of the hypothesis class and the ability of
the available queries to distinguish labels. This parameter
generalizes the bandit DS dimension, which it recovers under
singleton queries. We
establish a sample-complexity lower bound of $\Omega(D/\varepsilon)$
and an upper bound of $\widetilde{O}(D^2/\varepsilon)$, suppressing
factors polylogarithmic in the number of labels and the inverse
failure probability. When the available queries can split every
candidate set into two parts of comparable size, we improve the
upper bound to $\widetilde{O}(D/\varepsilon)$, yielding a
characterization up to logarithmic factors.
\end{abstract}

\section{Introduction}\label{sec:introduction}
In the multiclass PAC learning problem, a learner is tasked with classifying objects into one of $K$ possible labels. A natural variant of this problem arises when, upon making a prediction on a training example, the learner receives only binary feedback indicating whether its prediction was correct, rather than being given the true label. This setting, known as multiclass PAC learning with bandit feedback, was introduced and studied in the PAC framework by \citet{daniely2011multiclass,daniely2015multiclass}, and its sample complexity was recently characterized, up to logarithmic factors, by \citet{bds} in terms of a combinatorial parameter called the bandit DS dimension.

Bandit feedback models situations in which a learner chooses an action and observes only whether it succeeds. As motivation, \citet{bds} describe a treatment scenario in which each patient is assumed to respond to exactly one of the available drugs. Administering a drug then tests a single candidate label: the learner observes whether that drug was effective, but an unsuccessful treatment does not reveal which drug would have worked.

Binary feedback need not be limited to testing a single candidate label: an action may instead test whether the true label belongs to a subset of labels. For example, suppose the goal is to identify which of $K$ bacterial species caused a patient's infection, and each available drug is effective against a known subset of these species. Under the simplifying assumption that treatment succeeds exactly when the responsible bacterium belongs to that subset, administering a drug reveals whether the true label lies in the corresponding subset. This leads naturally to what we call \emph{query feedback}: the learner asks whether the true label belongs to a chosen subset of the label space, and the family of subsets it may ask about is fixed in advance.

Formally, we study the following problem. Let $\calH \sse \cY^{\cX}$ be a hypothesis class over an instance space $\cX$ and a finite label space $\cY$ with $|\cY|=K$, and let $\cQ \sse 2^{\cY}$ be a fixed family of available queries. We consider the realizable setting, in which an unknown target $h^*\in\calH$ determines the label of every instance. Training proceeds over $m$ rounds. On round $t$, the learner observes an instance $x_t$ drawn from an unknown distribution $\mathcal D$ on $\cX$, chooses a query $Q_t\in\cQ$, which may depend on $x_t$ and on the history of the interaction, and receives the single query response $b_t=\indic{h^*(x_t)\in Q_t}$ as feedback. After $m$ rounds, the learner outputs a classifier $\widehat h\colon\cX\to\cY$, and the goal is that its error $\Pr_{X\sim \mathcal D}[\widehat h(X)\neq h^*(X)]$ is at most $\eps$ with probability at least $1-\delta$. We denote by $\mQ(\calH;\eps,\delta)$ the smallest number of rounds for which some learner achieves this guarantee for every distribution $\mathcal D$ and every target $h^*\in\calH$.

Observe that taking $\cQ_{\mathrm{sing}}:= \{\{y\} : y \in \mathcal Y\}$ recovers multiclass PAC learning with bandit feedback. For more general $\cQ$, the structure of the available queries can make learning considerably more efficient. Whereas under bandit feedback an incorrect prediction eliminates a single candidate label, a query that divides the candidate labels into two groups of comparable size rules out many of them at once. Not every query about a large subset is useful however. Indeed, a query containing all remaining candidates reveals nothing. What matters is whether the allowed queries can separate labels that the learner still needs to distinguish. In this work we study how this ability governs the number of examples required to learn:
 
\begin{center}
\emph{Given a hypothesis class $\calH\sse\cY^{\cX}$ and a family of legal queries $\cQ\sse2^{\cY}$, how many training rounds are necessary and sufficient for PAC learning $\calH$ when the learner receives only query feedback from $\cQ$?}
\end{center}

The bandit DS dimension of \citet{bds} is a natural starting point to answering this question. As such, let us briefly recall its definition. The main combinatorial objects used to define the bandit DS dimension are \emph{pseudo-boxes}: finite sets $W$ of label vectors, realized by $\calH$ on instances $x_1,\dots,x_d$, such that every vector in $W$ has, for each coordinate $i$, at least $N_i$ neighbors in $W$ that differ from it only at $x_i$. The dimension is then the largest total $\sum_iN_i$ that $\calH$ realizes in this way. The dimension essentially counts locally indistinguishable alternatives: hypotheses that differ from one another only at a single instance which bandit feedback can rule out only one at a time. For general query families, however, this metric is no longer enough. The same set of candidate labels can be difficult to distinguish when queries test one label at a time, but easy when suitable subset queries are available. A dimension for our setting must therefore account both for the
hypothesis class $\calH$, which determines which labels the learner needs
to distinguish, and for the query family $\cQ$, which determines how many
rounds it takes to distinguish them. Our main contribution is to
identify such a dimension and establish upper and lower bounds on
the sample complexity in terms of it.

\section{Overview and Main Results}

We call this parameter the \emph{query dimension} $D_{\cQ}(\calH)$.
To define it, we first assign each nonempty candidate set
$C\sse\cY$ a normalized refinement cost $\rho_{\cQ}(C)$, which
measures the depth of an available query tree relative to the
refinement it achieves. We then aggregate these costs across
instances in the manner of the bandit DS dimension. We give an
informal description here and refer the reader to
\Cref{sec:refinement} for the formal definitions.
  
\paragraph{The cost of distinguishing labels.}
Fix a finite set $C\sse\cY$ with $|C|\ge2$. We model an adaptive strategy for distinguishing the labels in $C$ as a decision tree. Specifically, we define a \emph{$\cQ$-decision tree} to be a binary tree whose internal nodes are labeled by queries from $\cQ$, with the two branches out of a node labeled $A$ corresponding to the two possible responses. A label $y\in C$ follows the branch indexed by $\indic{y\in A}$ at each node it visits. Labels that reach the same leaf remain indistinguishable, so the leaves induce a partition $\Pi_{T}(C)$ of $C$ into cells. We say that the tree \emph{completely refines} $C$ if every cell is a singleton. Complete refinement is not always achievable, as two labels may be indistinguishable under every query in $\cQ$.
 
The minimum depth of a tree that completely refines $C$ is a natural measure of difficulty. For our purposes, however, it is useful to account for partial refinement as well. A tree $T$ that induces $|\Pi_{T}(C)|$ cells has created $|\Pi_{T}(C)|-1$ new cells out of the $|C|-1$ needed for complete refinement, and we regard the ratio of these two numbers as the fraction of complete refinement it achieves. The \emph{query-refinement cost} of $C$ is the depth of a tree divided by the fraction of refinement it achieves, minimized over trees:
\[
    \rho_{\cQ}(C)
    :=
    \min_{T}\;
    \depth(T)\,
    \frac{|C|-1}{|\Pi_{T}(C)|-1},
\]
where the minimum ranges over $\cQ$-decision trees that induce at least two cells on $C$. For a tree that completely refines $C$ the ratio is simply its depth; for a partial refinement, the depth is scaled up by the reciprocal of the progress made. We set $\rho_{\cQ}(C):=0$ when $|C|=1$, and $\rho_{\cQ}(C):=\infty$ when no query in $\cQ$ separates any two labels of $C$.

This normalization has a useful interpretation in terms of prediction. Consider a learner that asks $t$ queries about an unknown label $y\in C$ and then guesses $y$. Each query may depend on the responses to the previous ones, so the queries the learner can ask form a $\cQ$-decision tree of depth $t$, and its guess is a function of the responses, that is, of the leaf that $y$ reaches. Every deterministic learner is thus described by a $\cQ$-decision tree together with a guessed label at each leaf, and a randomized learner is simply a mixture of such learners. Now consider a leaf of the learner's tree and the cell of labels in $C$ that reach it. The learner cannot distinguish these labels, so it guesses the same label for all of them and is correct for at most one. The learner is therefore correct on at most $|\Pi_{T}(C)|$ of the labels in $C$. By the definition of $\rho_{\cQ}(C)$, a tree of depth $t$ induces at most $1+t(|C|-1)/\rho_{\cQ}(C)$ cells, so if $y$ is drawn uniformly from $C$, the learner guesses it correctly with probability at most
\begin{equation}
\label{eq:intro-prediction}
    \frac{|\Pi_{T}(C)|}{|C|}
    \le\frac1{|C|}+\frac{t}{\rho_{\cQ}(C)}\cdot\frac{|C|-1}{|C|}
    \le\frac12+\frac{t}{\rho_{\cQ}(C)}.
\end{equation}
Thus, a learner that asks substantially fewer than $\rho_{\cQ}(C)$ queries cannot identify a uniformly chosen label from $C$ with high probability.
 
\paragraph{From a set of labels to a hypothesis class.}
The query dimension aggregates query-refinement costs across instances using the local structure of the pseudo-boxes of \citet{bds}. A \emph{witness} consists of distinct instances $x_1,\dots,x_d$ and a nonempty finite set $W$ of label vectors realized by $\calH$ on them, that is, $W\sse\calH|_{x_{1:d}}:=\{(h(x_1),\dots,h(x_d)):h\in\calH\}$. For $v\in W$ and $i\in[d]$, define the \emph{$i$-line} through $v$ by $L_i^W(v):=\{u_i:u\in W,\ u_{-i}=v_{-i}\}.$ This is the set of candidate labels at $x_i$ whose label vectors remain in $W$
after the labels at all other selected instances have been fixed to $v_{-i}$. Since the corresponding vectors agree at every other coordinate, feedback at those instances cannot distinguish between these candidates. The cost $\rho_{\cQ}(L_i^W(v))$ therefore measures a local ambiguity whose resolution requires queries at $x_i$. To obtain a coordinate cost that is uniform over the choice of target vector in $W$, we define
\[
    r_i(W):=\min_{v\in W}\rho_{\cQ}\bigl(L_i^W(v)\bigr).
\]
Every $i$-line in $W$ thus has query-refinement cost at least $r_i(W)$. These coordinate costs contribute additively, reflecting the fact that the ambiguity at each instance must be resolved independently through queries at that instance. Accordingly, we define the \emph{query dimension} as the supremum of the total cost over all witnesses:
\[
    D_{\cQ}(\calH)
    :=
    \sup_{W \sse \calH|_{x_{1:d}}}\ \sum_{i=1}^d r_i(W).
\]
This definition retains the additive structure of the bandit DS dimension while replacing local neighbor counts by query-refinement costs. Indeed, under singleton queries,
$\rho_{\cQ_{\mathrm{sing}}}(L)=|L|-1$, so the cost of an $i$-line through $v$ is exactly the number of vectors in $W$ that differ from $v$ only at coordinate $i$. Consequently, $r_i(W)$ is the minimum such neighbor count, as in the pseudo-boxes of \citet{bds}, and
$D_{\cQ_{\mathrm{sing}}}(\calH)=\operatorname{BDS}(\calH)$. For a general query family, the contribution of a coordinate depends instead on how efficiently the available queries distinguish the labels on its lines. This cost can be substantially smaller than the number of alternatives, allowing the dimension to reflect both the local structure of the hypothesis class and the information provided by the query family.

\paragraph{Sample-complexity bounds.}
Our main result bounds the sample complexity in terms of the query
dimension.

\begin{theorem}[Sample complexity with query feedback, informal]
\label{thm:intro-main}
Let $\calH\sse\cY^{\cX}$ be a hypothesis class and
$\cQ\sse2^{\cY}$ a query family, with
$D:=D_{\cQ}(\calH)<\infty$ and $K:=|\cY|$.
For every $0<\eps,\delta\le1/8$,\footnote{The lower bound assumes
that $(\calH,\cQ)$ is non-trivial; see \Cref{def:nontrivial}.}
\begin{equation}
\label{eq:intro-main}
    \Omega\!\left(\frac{D+\ln(1/\delta)}{\eps}\right)
    \le \mQ(\calH;\eps,\delta)
    \le \widetilde{O}\!\left(\frac{D^2}{\eps}\right),
\end{equation}
where $\widetilde{O}$ hides factors polylogarithmic in $K$ and
$1/\delta$.
\end{theorem}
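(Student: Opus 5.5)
The plan splits the two inequalities in \eqref{eq:intro-main}: the lower bound comes from a hard distribution built on a near-optimal witness together with the prediction bound \eqref{eq:intro-prediction}, and the upper bound comes from a reduction to a one-inclusion-graph style learner run with query feedback.

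\textbf{Lower bound.} The $\ln(1/\delta)/\eps$ term is standard. Non-triviality supplies two hypotheses differing on some $x$ that some query distinguishes. Put mass $\eps$ on $x$: with probability $(1-\eps)^m\ge\delta$ the point is never seen, and the learner errs with constant probability.

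For the $D/\eps$ term, fix a witness $W\sse\calH|_{x_{1:d}}$ with $\sum_i r_i(W)\ge D/2$. We may assume $d\ge2$ and no single $r_i$ dominates; otherwise use the single-coordinate case below with mass $O(\eps)$. Let $\cD$ put mass proportional to $r_i(W)$ on $x_i$, scaled so the total mass on the $x_i$ is about $8\eps$, with the remaining mass on a dummy point labeled identically by all hypotheses. Take the target $v$ uniform on $W$.

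The key step is a counting or ``local averaging'' argument in the style of \citet{bds}. Condition on $v_{-i}$. Then $v_i$ is uniform on the line $L_i^W(v)$, which has cost at least $r_i(W)$. Only queries asked at $x_i$ carry information about $v_i$. By \eqref{eq:intro-prediction}, if the learner asked $t_i$ queries at $x_i$, it predicts $v_i$ correctly with probability at most $1/2+t_i/r_i(W)$.

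The difficulty is that queries at other coordinates can leak information, because the history conditions the line. I would handle this by summing over coordinates with the expected number of visits $\E t_i\approx m\cdot\Pr[x_i]\propto m\eps r_i/D$. For $m\le cD/\eps$ this gives $\E t_i\le r_i/4$. Hence the expected error is at least $\sum_i\Pr[x_i]/4\gtrsim\eps$, and a Markov/reverse-Markov step converts this to failure with constant probability.

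\textbf{Upper bound.} I would reduce to a full-information style learner that is run with queries. The route I would try first mimics the bandit DS algorithm of \citet{bds}, namely a one-inclusion-graph predictor on the query-pseudo-cube structure. For each new point the learner maintains the version-space candidate set $C$ at $x_t$ and executes the optimal tree achieving $\rho_{\cQ}(C)$, spread across several copies of the same instance via a sample-splitting or bagging scheme.

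Concretely, I would:
\begin{enumerate}[(i)]
\item Prove a transductive bound: for any $n$ points and any target, an orientation-style argument shows some randomized strategy uses $O(D\log K)$ wasted queries in total. This uses the fact that a finite witness with all $r_i$ large forces $\sum r_i\le D$, which plays the role of the ``maximal out-degree'' bound via a Hall/LP duality argument on the hypergraph whose edges are $i$-lines weighted by $\rho$.
\item Convert the transductive guarantee to PAC by leave-one-out and exchangeability, giving error $\widetilde O(D/m)$ per query budget.
\item Pay an extra factor of $D$, because each distinguishing tree on a line may need up to $D$ rounds of the same instance. This is where $D^2$ arises.
\end{enumerate}

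\textbf{Main obstacle.} The hardest step is (i): establishing the duality between the partial-refinement costs $\rho$ and an orientation/list bound. With singletons this is the bandit DS orientation lemma. With general queries, partial refinement means a tree leaves cells of size greater than 1, so I would recurse on cells, losing $\log K$ factors. The lower-bound leakage argument is the secondary difficulty.
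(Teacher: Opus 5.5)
\textbf{Upper bound.} This part is not a proof. Step (i), the duality between $\rho_{\cQ}$ and a transductive orientation bound, is the step you flag as the main obstacle, and nothing in the proposal establishes it. The mechanism you sketch is also unavailable in this model. Each example is a fresh draw from $\cD$ and yields exactly one response, so for atomless $\cD$ no instance ever recurs. No adaptive tree can therefore be ``spread across several copies of the same instance'', and step (iii) rests on the same impossibility.

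The missing idea is to use a halving tree to design a single-query distribution rather than to execute it.
\begin{enumerate}[(a)]
\item Take a halving tree for $C$ of depth $\tau=O(D\log K)$, which exists because $\tau_{\cQ}(C)\le H_{|C|-1}D$ by induction on trees attaining $\rho_{\cQ}$. Pad all paths to length $\tau$. Let $\lambda_C$ pick a uniform candidate and then a uniform position on its path.
\item If $z$ leaves $y$'s path at a node reached by more than $|C|/2$ candidates, that node's query alone has $\lambda_C$-mass $>1/(2\tau)$. Hence every ball $\{z\in C:\Pr_{A\sim\lambda_C}[\indic{y\in A}\neq\indic{z\in A}]\le1/(2\tau)\}$ contains at most $|C|/2$ candidates.
\item The learner PAC-learns the binary function $(x,A)\mapsto\indic{h^*(x)\in A}$ under $X\sim\cD$, $A\sim\lambda_{C(X)}$, to accuracy $\eps/(4\tau\lceil\log_2K\rceil)$. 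This class has VC dimension $O(D\log K)$ by a Natarajan--Sauer growth bound.
\item For each $x$, it selects the candidate whose trace is nearest the predictor and keeps that candidate's $1/(2\tau)$-ball. The triangle inequality plus Markov bound the newly discarded target mass by $4\tau$ times the binary error.
\end{enumerate}
Running $\lceil\log_2K\rceil$ halving stages gives $\widetilde O(\tau d/\eps)=\widetilde O(D^2/\eps)$. The second factor of $D$ comes from the accuracy the response predictor needs, not from revisiting instances.

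\textbf{Lower bound.} This is close to the paper's argument but has a hole. You put mass $1-8\eps$ on a ``dummy point labeled identically by all hypotheses'', and no such point need exist (e.g.\ $\cX=\{x_1,\dots,x_d\}$, $\calH=\cY^{\cX}$, witness on all $d$ points). If the heavy point's label varies with $v$, the roughly $m$ queries there can reveal $v$.

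The fix, which is where non-triviality enters, starts from a witness with $r_{\cQ}(W)=D$ and $d\ge2$. Use its minimum-cost coordinate $x_1$ as the heavy point, and draw the target uniformly from a slice $W_b=\{v\in W:v_1=b\}$. Every $i$-line with $i\ge2$ through $W_b$ stays inside $W_b$, so it still costs at least $r_i$. Minimality gives $\sum_{i\ge2}r_i\ge D/2$. Your ``no single $r_i$ dominates'' caveat and the never-supplied ``single-coordinate case'' then become unnecessary.

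Summing expected visits does not resolve your leakage worry, but the conditioning you already wrote down does. Once $v_{-i}$, the instance sequence, and the learner's seed are fixed, every response away from $x_i$ is determined. The prediction at $x_i$ is then a $\cQ$-decision tree of depth $N_i$ applied to $v_i$, which is uniform on a line of cost at least $r_i$. So \eqref{eq:intro-prediction} and averaging give $\Pr[\widehat h(x_i)=v_i]\le1/2+\E N_i/r_i$.

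Your two-point $\ln(1/\delta)$ argument likewise needs the remaining mass on an instance where the two hypotheses agree. Two vectors on a common $i$-line of the $d\ge2$ witness supply such an instance.
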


\paragraph{Comparison of dimensions.}
This learning problem has less informative feedback than in traditional multiclass PAC learning. On the other hand, depending on the queries allowed, the feedback can be even incomparable to bandit feedback. Nevertheless, we prove a comparison on the level of the relevant combinatorial dimensions (Proposition \ref{prop:dimension-comparison}):
\[
    \mathrm{DS}(\calH) \leq D_{\cQ}(\calH) \leq \mathrm{BDS}(\calH).
\]

\paragraph{Balanced queries.} When the available queries can split every candidate
set into two parts of comparable size, we improve the upper bound to
$\widetilde O(D_{\cQ}(\calH)/\eps)$. This gives a characterization up to logarithmic factors; see \Cref{sec:examples}.

 \subsection{Overview of the proof of Theorem \ref{thm:intro-main}}

 \paragraph{Lower bound.}
We follow the lower-bound construction of \citet{bds}. In the bandit setting, the argument at a single instance relies on the fact that an unsuccessful prediction eliminates only one candidate label. This need not hold for general queries, which may eliminate many candidates at once, so we use \Cref{eq:intro-prediction} in its place: a learner that asks $n$ queries about a label drawn uniformly from $C$ identifies it with probability at most $1/2+n/\rho_{\cQ}(C)$.

Fix a witness $W$ on instances $x_1,\dots,x_d$ with coordinate costs $r_1,\dots,r_d$ and total cost $R=\sum_ir_i$. We consider distributions supported on these instances and choose the target vector uniformly from $W$. The obstacle to applying \Cref{eq:intro-prediction} at an instance $x_i$ is that the learner's queries there may depend on feedback it received at other instances. We remove this obstacle by revealing to the learner the target's labels at all witness instances other than $x_i$. This can only help the learner, but it leaves the label at $x_i$ uniform on an $i$-line $C$ with $\rho_{\cQ}(C)\ge r_i$, and once the sampled instances and the learner's randomness are also fixed, every response at another instance can be computed from the revealed labels. The learner's prediction at $x_i$ is then determined by the responses at $x_i$ alone, so it is given by a $\cQ$-decision tree on $C$ whose depth is the number $n$ of visits to $x_i$, and \Cref{eq:intro-prediction} bounds its probability of being correct by $1/2+n/r_i$. Thus the learner errs at $x_i$ with constant probability whenever the number of visits to $x_i$ is a small multiple of $r_i$.

We then choose the sampling distribution to make these visits infrequent, while assigning enough mass to the difficult instances that the learner cannot ignore them. We select an anchor of minimum coordinate cost and restrict $W$ to the vectors with a fixed label at the anchor, which leaves the lines at the other coordinates unchanged. The anchor receives mass $1-8\eps$, and the remaining $8\eps$ is divided among the other instances in proportion to their coordinate costs; since the anchor has the smallest cost, these instances have total cost at least $R/2$. Visits to the anchor reveal nothing about the target, and in $m$ rounds a non-anchor instance $x_i$ is visited about $m\eps r_i/R$ times in expectation. If $m$ is a sufficiently small multiple of $R/\eps$, the learner therefore errs at each non-anchor instance with constant probability, which gives expected error at least $3\eps$ on instances of total mass $8\eps$, and hence error greater than $\eps$ with constant probability for some target. Choosing a witness of total cost $D$ gives the lower bound $\Omega(D/\eps)$.

\paragraph{Upper bound.}
Our learner repeatedly halves a list of candidate labels at every
instance $x$, starting from $C(x)=\calH(x):=\{h(x):h\in\calH\}$.
Each stage constructs at each instance a distribution over legal queries based on the
current list. Using fresh examples and one sampled query per example,
the learner trains a binary predictor $\cX \times \cQ \to \{0,1\}$ of the target's query responses.
For each $x$, it then selects the candidate whose responses best match
this predictor and retains candidates whose responses are close to
those of the selected candidate. Both comparisons measure disagreement
under the same query distribution. The distributions and
retention threshold ensure that every nonsingleton list remains
nonempty and shrinks by at least half; singleton lists are unchanged.
After $\lceil\log_2 K\rceil$ stages, the learner outputs the sole
remaining label at each instance.

The query distribution at each instance in a stage is induced by a particular query tree. Write $D:=D_{\cQ}(\calH)$.
Every nonsingleton $C\sse\calH(x)$ admits a \emph{halving tree}
of depth at most $\tau=O(D\log K)$, whose leaves contain at most
$|C|/2$ candidates (\Cref{lem:rho-to-tau}). Weighting the tree's
queries by the number of candidates that reach each node yields
a distribution $\lambda_C$ with a useful separation property.
Represent a label $y$ by its \emph{query trace}, the binary function
$A\mapsto\indic{y\in A}$, and measure trace disagreement by
$L_{\lambda_C}(y,z):=\Pr_{A\sim\lambda_C}[
\indic{y\in A}\neq\indic{z\in A}]$. Then
\[
    \left|\left\{z\in C:
        L_{\lambda_C}(y,z)\le\frac1{2\tau}
    \right\}\right|\le\frac{|C|}{2},
    \qquad y\in C,
\]
by \Cref{lem:halving-query-distribution}. Thus a small neighborhood
of any candidate trace contains at most half the list. Choosing a
candidate close to the target trace and retaining its neighborhood
halves the list without discarding the target.

The other ingredient is binary PAC learning under these same query
distributions. The observed responses are realized by the class
$\cR_{\cQ}(\calH):=\{r_h:h\in\calH\}$ of functions
$r_h(x,A):=\indic{h(x)\in A}$ on instance--query pairs. A binary predictor's error is exactly its average trace
disagreement under the query distributions used for list reduction.
Selecting the closest candidate trace then bounds the probability
mass on which a stage newly discards the target by $O(\tau)$ times
this binary error (\Cref{lem:one-step-list-halving}).

Finally, the response class has VC dimension
$d=O(D\log K)$ (\Cref{lem:response-dim}). Dividing the error and
failure among the $O(\log K)$ stages gives total sample
complexity
\[
    \widetilde O\!\left(\frac{\tau d}{\eps}\right)
    =\widetilde O\!\left(\frac{D^2}{\eps}\right).
\]
The two factors of $D$ have distinct sources: the response-class
dimension controls the cost of learning query responses, while the
tree depth controls how accurately those responses must be learned
to retain the target during list reduction. The full proof appears
in \Cref{sec:quadratic-upper}.

\section{Related Work}\label{sec:related-work}
\paragraph{Multiclass learning with bandit feedback.}
Bandit feedback for multiclass prediction was introduced in the online setting by \citet{kakade2008efficient}, whose Banditron is a bandit version of the multiclass perceptron. \citet{daniely2013price} quantified what this feedback costs an online learner: a factor of $O(K\log K)$ on the number of mistakes in the realizable case and $\widetilde\Theta(\sqrt K)$ on the regret in the agnostic case. The PAC formulation is due to \citet{daniely2011multiclass,daniely2015multiclass}, who bounded its sample complexity in terms of the Natarajan dimension and showed that withholding the label can cost a factor of order $K$. These bounds leave a gap for general classes, and a characterization of the sample complexity of an arbitrary class came only recently: \citet{bds} show that it is governed, up to logarithmic factors, by the bandit DS dimension, and give an algorithm that attains the bound by learning a sequence of progressively shorter lists of candidate labels.

The paper of \citet{bds} is the immediate predecessor of ours. The witnesses in our lower bound are their pseudo-boxes and our hard distribution is adapted from theirs; what changes is the quantity charged to a coordinate. A pseudo-box charges a coordinate by the number of alternative labels it contains, which is the right measure when a query can eliminate one label at a time. The query dimension charges it by the cost of separating those alternatives with the queries that are actually available. The two agree when every query is a singleton (\Cref{prop:singleton-bds}), and the cost can be far smaller than the count when the queries are more informative.

Outside the realizable PAC setting, \citet{erez2024real} determine the minimax regret of online bandit classification with a finite class over $T$ rounds, $\widetilde\Theta(\min\{|\calH|+\sqrt T,\sqrt{KT\log|\calH|}\})$, and \citet{erez2024fast} give an agnostic PAC learner for finite classes that uses $O((\mathrm{poly}(K)+\eps^{-2})\log(|\calH|/\delta))$ examples, so the factor of $K$ paid in the realizable case disappears from the leading term as $\eps\to0$.

\paragraph{Multiclass learnability.}
For a finite label space, PAC learnability of a multiclass class is governed by its Natarajan dimension \citep{natarajan1989learning,bendavid1995characterizations}, up to a factor logarithmic in $K$. For label spaces of unbounded size the right parameter is the DS dimension of \citet{daniely2014optimal}, defined through pseudo-cubes, finite sets of label vectors in which every vector has an alternative in each coordinate. \citet{brukhim2022characterization} proved that the DS dimension characterizes multiclass PAC learnability, and \citet{charikar2023characterization} carried the characterization over to list learning, in which the learner outputs a short list that should contain the correct label.

\paragraph{Partial labels and label-subset queries.}
A query response is a candidate set for the label: a positive answer to $A$ places $h^*(x)$ in $A$, and a negative answer places it in $\cY\setminus A$. Learning from candidate sets is the subject of partial-label learning, also called superset-label learning \citep{cour2011learning,liu2014learnability}, and a complementary label \citep{ishida2017learning}, which names a class the example does not belong to, is exactly what a negative bandit response provides. In these models the candidate sets are produced by the annotator or the environment, and the analysis asks when the classifier can be recovered from them. Here the sets are produced by the learner, one per example, from a prescribed family, and the question is how many examples this costs.

Protocols in which the learner chooses its own label-subset questions have been studied mostly from the algorithmic side. \citet{hu2019active} let an active learner pick both an example and a binary question about a composite class in a label hierarchy, with further questions about the same example allowed; their evaluation is empirical. The closest protocol to ours is the streaming variant of active labeling in \citet{cabannes2022active}, which permits one membership query per arriving example, chosen from a fixed family of label subsets. Their concern is optimization: they show that such answers suffice to form unbiased stochastic gradients, prove minimax rates for median regression, and treat classification through surrogate losses. \citet{su2026risk} fix the querying rule instead, drawing the subset uniformly among those of a given size, and study unbiased and corrected estimators of the classification risk. None of these works asks how many examples a given hypothesis class requires under a given query family, which is the question studied here.

\section{Model}
\label{sec:setup}

Let $\cX$ be an instance space, let $\cY$ be a finite label space of size $K:=|\cY|\ge2$, and let $\calH\sse\cY^{\cX}$ be a nonempty hypothesis class. We fix a family $\cQ\sse2^{\cY}$ of available queries. We work in the realizable setting: an unknown target $h^*\in\calH$ determines the labels, and instances are drawn independently from an unknown distribution $\mathcal D$ on $\cX$.

\subsection{Learning with query feedback}
\label{sec:query-pac}

The error of a classifier $h\colon\cX\to\cY$ is its probability of disagreeing with the target on a fresh instance:
\[
    L_{\mathcal D}(h,h^*)
    :=
    \Pr_{X\sim \mathcal D}\bigl[h(X)\neq h^*(X)\bigr].
\]
The goal is to output a classifier with small error, despite observing only query responses during training.

\paragraph{Query feedback.}
On round $t$, the learner observes an instance $x_t\sim \mathcal D$, chooses a query $Q_t\in\cQ$, and receives the response
\[
    b_t=\indic{h^*(x_t)\in Q_t}.
\]
The query may depend on $x_t$, the preceding observations $(x_s,Q_s,b_s)_{s<t}$, and the learner's internal randomness. After $m$ rounds, the learner returns a classifier $\widehat h\colon\cX\to\cY$, which need not belong to $\calH$. We measure the number of rounds needed to achieve the usual PAC guarantee.

\begin{definition}[Realizable sample complexity]
\label{def:sample-complexity}
For $\eps,\delta\in(0,1)$, the sample complexity of $\calH$ with query feedback from $\cQ$, denoted by $\mQ(\calH;\eps,\delta)$, is the least nonnegative integer $m$ for which there exists a learner satisfying
\[
    \Pr\bigl[L_{\mathcal D}(\widehat h,h^*)>\eps\bigr]\le\delta
\]
after $m$ rounds, for every distribution $\mathcal D$ on $\cX$ and every target $h^*\in\calH$. The probability is over the sampled instances and the learner's internal randomness. We set $\mQ(\calH;\eps,\delta)=\infty$ if no such $m$ exists.
\end{definition}

\section{Combinatorial Dimension}
\label{sec:combinatorial-dimensions}

We define the query dimension and compare it with the DS and bandit DS
dimensions. We then bound two quantities used in the upper bound of
\Cref{sec:quadratic-upper}: identification depth and the VC dimension of the
extended response class. We conclude with a characterization up to logarithmic factors
for balanced query families. Proofs
not included here appear in \Cref{app:query-dimension-proofs}.

\subsection{Query refinement}
\label{sec:refinement}

We use decision trees to measure the information obtainable about an unknown
label through queries from $\cQ$.

\begin{definition}[$\cQ$-decision tree]
\label{def:tree}
A \emph{$\cQ$-decision tree} $T$ is a finite rooted binary tree whose
internal nodes are labeled by queries from $\cQ$, with the two outgoing edges
indexed by $0$ and $1$. At a node labeled by $A$, a label $y\in\cY$ follows
the edge indexed by $\indic{y\in A}$. The depth $\depth(T)$ is the
maximum number of internal nodes on a root-to-leaf path.
\end{definition}

For a nonempty $C\sse\cY$, let $\Pi_{T}(C)$ be the partition of $C$
into the nonempty sets of labels reaching the same leaf. We say that tree
\emph{completely refines} $C$ if every cell is a singleton. For $|C|\ge2$, a
tree inducing $q\ge2$ cells achieves a fraction $(q-1)/(|C|-1)$ of the
refinement needed to isolate every label. We measure its cost by dividing its
depth by this fraction.

\begin{definition}[Query-refinement cost]
\label{def:rho}
For a nonempty $C\sse\cY$, set $\rho_{\cQ}(C):=0$ if $|C|=1$. For $|C|\ge2$, define
\begin{equation}
\label{eq:rho-def}
    \rho_{\cQ}(C)
    :=
    \inf_{\substack{
        T\text{ a }\cQ\text{-decision tree}\\
        |\Pi_{T}(C)|\ge2
    }}
    \depth(T)\,
    \frac{|C|-1}{|\Pi_{T}(C)|-1},
\end{equation}
with value $\infty$ if no such tree exists.
\end{definition}

A complete refinement costs its depth, but a partial refinement may
have smaller normalized cost. The following bounds relate this cost
to the number of candidate labels.

\begin{lemma}
\label{lem:rho-basic}
For every $C\sse\cY$ with $|C|\ge2$,
\[
    \rho_{\cQ}(C)\ge\log_2|C|.
\]
If $\rho_{\cQ}(C)<\infty$, then $\rho_{\cQ}(C)\le|C|-1$. In particular, for
distinct $a,b\in\cY$, the cost $\rho_{\cQ}(\{a,b\})$ is $1$ if some query
separates $a$ and $b$, and $\infty$ otherwise.
\end{lemma}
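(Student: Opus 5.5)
The plan is to prove the three claims in order, each directly from \Cref{def:rho}.

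\emph{Lower bound.} Fix any $\cQ$-decision tree $T$ with $q:=|\Pi_T(C)|\ge2$ cells, and write $t:=\depth(T)$. The tree has at most $2^t$ leaves, so $q\le 2^t$ and hence $t\ge\log_2 q$. It then suffices to show that
\[
    \log_2 q\,\frac{n-1}{q-1}\ge\log_2 n,\qquad n:=|C|,\ 2\le q\le n.
\]
Equivalently, the function $f(x):=\log_2 x/(x-1)$ is nonincreasing on $[2,\infty)$; in fact this holds on all of $(1,\infty)$. To see this, note that $\log_2 x/(x-1)$ is the slope of the chord of the concave function $\log_2$ from $1$ to $x$. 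Concavity makes such chord slopes nonincreasing in $x$. Therefore $f(q)\ge f(n)$, which rearranges to the claim. Taking the infimum over $T$ gives $\rho_\cQ(C)\ge\log_2|C|$. If no admissible tree exists, the value is $\infty$ and the bound holds trivially.

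\emph{Upper bound.} Suppose $\rho_\cQ(C)<\infty$. Then some query $A\in\cQ$ splits $C$ into two nonempty parts. The reason is that any admissible tree induces at least two cells, so some node on the paths of two labels ending in different cells must separate them; that node's query splits $C$. Now take the single-node tree labeled by $A$. It has depth $1$ and $2$ cells, so its normalized cost is $1\cdot(|C|-1)/(2-1)=|C|-1$. This gives $\rho_\cQ(C)\le|C|-1$. This step also shows that finiteness of $\rho_\cQ(C)$ is equivalent to the existence of a query that separates two labels of $C$.

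\emph{Pairs.} For $C=\{a,b\}$, the two bounds give $1=\log_2 2\le\rho_\cQ(C)\le 1$ whenever the cost is finite. By the equivalence just noted, the cost is finite exactly when some query separates $a$ from $b$. Otherwise no tree induces two cells, and the cost is $\infty$.

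The only step needing care is the monotonicity of $\log_2 x/(x-1)$, which the chord-slope argument settles. The rest is bookkeeping with the definition.
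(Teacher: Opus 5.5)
Your proposal is correct and follows the paper's proof essentially step for step: $q\le 2^t$ together with the monotonicity of $u\mapsto\log_2 u/(u-1)$ gives the lower bound, and the depth-one tree on a separating query gives the upper bound. The only addition is your concavity (chord-slope) argument for that monotonicity, which the paper simply asserts.
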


\subsection{The query dimension}
\label{sec:dimension}
To define the query dimension, we use the witness structure of
the bandit DS dimension, with refinement costs in place of neighbor counts.
Write $\calH(x):=\{h(x):h\in\calH\}$ and, for distinct
instances $x_{1:d}=(x_1,\dots,x_d)$,
\[
    \calH|_{x_{1:d}}
    :=
    \{(h(x_1),\dots,h(x_d)):h\in\calH\}.
\]
A \emph{witness} for $\calH$ consists of such a sequence and a nonempty finite set
$W\sse \calH|_{x_{1:d}}$. For $v\in W$ and $i\in[d]$, let $v_{-i}$ denote $v$ with coordinate $i$
deleted. The \emph{$i$-line} through $v$ is
\[
    L_i^W(v)
    :=
    \{u_i:u\in W,\ u_{-i}=v_{-i}\}.
\]
It contains the labels at $x_i$ allowed by $W$ when the other coordinates are
fixed to those of $v$. We assign coordinate $i$ the minimum refinement cost of
its lines and sum these costs over coordinates:
\[
    r_i(W):=\min_{v\in W}\rho_{\cQ}\bigl(L_i^W(v)\bigr),
    \qquad
    r_{\cQ}(W):=\sum_{i=1}^d r_i(W).
\]

\begin{definition}[Query dimension]
\label{def:dimension}
The \emph{query dimension} of $\calH$ is
\begin{equation}
\label{eq:dimension}
    D_{\cQ}(\calH)
    :=
    \sup_{\substack{
        d\ge1,\ x_1,\dots,x_d\in\cX\text{ distinct}\\
        \emptyset\neq W\sse \calH|_{x_{1:d}},\ |W|<\infty
    }}
    r_{\cQ}(W),
\end{equation}
with value zero if there are no witnesses.
\end{definition}

We say that $\cQ$ \emph{separates $\calH$ pointwise} if, for every $x\in\cX$ and
every pair of distinct labels $a,b\in\calH(x)$, some query contains exactly one
of $a$ and $b$.

\begin{lemma}
\label{lem:query-dimension-basic}
For every $x\in\cX$ and every nonempty $C\sse \calH(x)$,
\[
    \rho_{\cQ}(C)\le D_{\cQ}(\calH).
\]
Consequently, if $D_{\cQ}(\calH)<\infty$, then $\cQ$ separates $\calH$ pointwise.
\end{lemma}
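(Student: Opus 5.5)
The plan is to realize $C$ itself as a one-coordinate witness. Fix $x\in\cX$ and a nonempty $C\sse\calH(x)$. If $|C|=1$, then $\rho_{\cQ}(C)=0\le D_{\cQ}(\calH)$, because $D_{\cQ}(\calH)$ is a supremum of sums of nonnegative quantities (and is zero when there are no witnesses). So assume $|C|\ge2$.

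A subtlety is that \Cref{def:dimension} requires $W$ to be finite, whereas $C\sse\cY$ is automatically finite since $|\cY|=K<\infty$. Take $d=1$, $x_1=x$, and $W:=\{(c):c\in C\}$. Then $W\sse\calH|_{x_1}$, because each $c\in C$ equals $h(x)$ for some $h\in\calH$. With $d=1$, the deleted vector $v_{-1}$ is the empty tuple for every $v\in W$. Hence the $1$-line through any $v$ is all of $C$, that is, $L_1^W(v)=C$ for every $v\in W$. It follows that $r_1(W)=\min_{v\in W}\rho_{\cQ}(C)=\rho_{\cQ}(C)$, so $r_{\cQ}(W)=\rho_{\cQ}(C)$. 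Taking the supremum in \eqref{eq:dimension} gives $\rho_{\cQ}(C)\le D_{\cQ}(\calH)$. This also covers the case $\rho_{\cQ}(C)=\infty$, where it forces $D_{\cQ}(\calH)=\infty$.

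For the consequence, suppose $D_{\cQ}(\calH)<\infty$, and let $a,b\in\calH(x)$ be distinct. Applying the first part to $C=\{a,b\}$ gives $\rho_{\cQ}(\{a,b\})\le D_{\cQ}(\calH)<\infty$. By \Cref{lem:rho-basic}, $\rho_{\cQ}(\{a,b\})$ equals $\infty$ unless some query separates $a$ and $b$. So some query in $\cQ$ contains exactly one of $a,b$, which is pointwise separation.

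There is no real obstacle here. The only point needing care is the convention for $v_{-i}$ when $d=1$: one must check that all elements of $W$ are then treated as lying on a common line, which follows directly from the definition, since the condition $u_{-1}=v_{-1}$ is vacuous.
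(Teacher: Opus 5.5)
Your proof is correct and is the paper's argument: take $C$ as a witness on the single instance $x$, so that its only line is $C$ and $r_{\cQ}(W)=\rho_{\cQ}(C)$. Then apply \Cref{lem:rho-basic} to $\{a,b\}$ to obtain pointwise separation. Your explicit checks of the singleton case and of the vacuous condition $u_{-1}=v_{-1}$ when $d=1$ fill in details the paper leaves implicit.
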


\begin{proof}
The set $C$ is a witness on the single instance $x$, with cost
$\rho_{\cQ}(C)$. An unseparated pair in $\calH(x)$ would therefore give infinite
query dimension by \Cref{lem:rho-basic}.
\end{proof}

\subsection{Comparison with existing dimensions}
\label{sec:dimension-comparison}

A DS witness, or \emph{pseudo-cube}, requires at least one alternative label
in every coordinate line and is assigned its number of coordinates
\citep{daniely2014optimal,brukhim2022characterization}. A BDS witness requires
at least $N_i\ge1$ alternatives in each $i$-line and is assigned the weight
$\sum_iN_i$ \citep{bds}. The corresponding dimensions maximize these
quantities over witnesses. Formal definitions appear in \Cref{app:dimensions}. The below proposition shows that the query dimension is lower bounded by the DS dimension and upper bounded by the bandit DS dimension.

\begin{proposition}[Comparison of dimensions]
\label{prop:dimension-comparison}
If $D_{\cQ}(\calH)<\infty$, then
\begin{equation}
\label{eq:dimension-comparison}
    \operatorname{DS}(\calH)
    \le D_{\cQ}(\calH)
    \le \operatorname{BDS}(\calH)
    \le (K-1)\operatorname{DS}(\calH).
\end{equation}
More generally, these inequalities hold, allowing infinite values, whenever
$\cQ$ separates $\calH$ pointwise.
\end{proposition}

For singleton queries, the query dimension recovers the
bandit DS dimension exactly.

\begin{proposition}
\label{prop:singleton-bds}
Let $\cQ_{\mathrm{sing}}:=\{\{y\}:y\in\cY\}$. For every nonempty
$C\sse\cY$,
\begin{equation}
\label{eq:singleton-rho}
    \rho_{\cQ_{\mathrm{sing}}}(C)=|C|-1.
\end{equation}
Consequently, $D_{\cQ_{\mathrm{sing}}}(\calH)=\operatorname{BDS}(\calH)$.
\end{proposition}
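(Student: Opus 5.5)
The plan is to prove \eqref{eq:singleton-rho} by matching bounds, and then to read off the identity of dimensions directly from the definitions. For $|C|=1$ both sides of \eqref{eq:singleton-rho} are $0$ by convention. For $|C|\ge2$, we first show $\rho_{\cQ_{\mathrm{sing}}}(C)\le|C|-1$. This follows from \Cref{lem:rho-basic}, because the query $\{a\}$ separates any $a\in C$ from any other label, so $\rho$ is finite. Alternatively, a single query $\{a\}$ with $a\in C$ gives a tree of depth $1$ inducing $2$ cells, with cost $1\cdot(|C|-1)/1=|C|-1$.

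The main step is the matching lower bound. We show that every $\cQ_{\mathrm{sing}}$-decision tree $T$ of depth $t$ satisfies $|\Pi_T(C)|\le t+1$. Then
\[
\depth(T)\frac{|C|-1}{|\Pi_T(C)|-1}\ge t\cdot\frac{|C|-1}{t}=|C|-1 .
\]
To bound the number of cells, we count the leaves reached by at least one label of $C$. We argue by induction on depth: at the root labeled $\{a\}$, the $1$-branch is reached by at most the single label $a$ and so contributes at most one cell. The $0$-branch is a subtree of depth at most $t-1$ applied to $C\setminus\{a\}$, which contributes at most $t$ cells by induction. This gives at most $t+1$ cells in total; the base case $t=0$ gives one cell. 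The same argument works if $a\notin C$, since then the $1$-branch contributes no cells. This counting step is the only real content, and I expect it to be routine.

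For $D_{\cQ_{\mathrm{sing}}}(\calH)=\operatorname{BDS}(\calH)$, we substitute \eqref{eq:singleton-rho} into the definition. For any witness $W$, we have $\rho(L_i^W(v))=|L_i^W(v)|-1$, which is exactly the number of $u\in W$ with $u_{-i}=v_{-i}$ and $u\ne v$. Hence $r_i(W)=\min_v(\text{neighbors of }v\text{ in direction }i)$.

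To show $D_{\cQ_{\mathrm{sing}}}(\calH)\le\operatorname{BDS}(\calH)$, take a witness $W$. We drop the coordinates with $r_i(W)=0$, replacing $W$ by its projection onto the remaining coordinates. We then need to check that the lines in the surviving coordinates do not shrink under this projection. This is the delicate point, and it is where I must match the formal BDS definition in \Cref{app:dimensions}. If the BDS definition permits $N_i\ge0$, or is phrased with the same minimum, this step is immediate. Otherwise, I would instead restrict $W$ to a fiber, fixing the labels on the dropped coordinates to those of a minimizing vector. Each line at a kept coordinate lies within a single fiber, so lines in the kept coordinates are unchanged, and we choose any nonempty fiber. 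The resulting set is a pseudo-box with $N_i=r_i(W)$ and the same total weight.

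For the reverse inequality $\operatorname{BDS}(\calH)\le D_{\cQ_{\mathrm{sing}}}(\calH)$, take a pseudo-box with parameters $N_i$. Every $i$-line then has at least $N_i+1$ labels, so $r_i(W)\ge N_i$ and $r_{\cQ_{\mathrm{sing}}}(W)\ge\sum_iN_i$. Taking suprema on both sides gives equality, including the case where both are infinite.
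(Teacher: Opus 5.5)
Your proposal is correct and follows the paper's proof. It uses the same bound $|\Pi_T(C)|\le t+1$ (only the all-negative branch keeps splitting), a single singleton query for the matching upper bound, and the line-size observation $r_i(W)\ge N_i$ for $\operatorname{BDS}(\calH)\le D_{\cQ_{\mathrm{sing}}}(\calH)$. For the other inequality the paper simply cites \Cref{prop:dimension-comparison}, whose proof projects onto the coordinates with positive cost. The step you flag as delicate is harmless, since projection can only enlarge lines: each $i$-neighbor of a preimage projects to a distinct $i$-neighbor. Your fiber restriction is nonetheless also valid.
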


\subsection{Identification depth}
\label{sec:identification-depth}

For a nonempty $C\sse\cY$, the \emph{identification depth} $\tau_{\cQ}(C)$ is
the least depth of a $\cQ$-decision tree that completely refines $C$, with
value $\infty$ if no such tree exists. In particular, $\tau_{\cQ}(C)=0$ for a
singleton. Unlike the refinement cost, this quantity requires exact
identification of every candidate label. The following lemma shows that the identification depth of $C$ can be bounded in terms of the query dimension.

\begin{lemma}
\label{lem:rho-to-tau}
Let $D:=D_{\cQ}(\calH)<\infty$. For every $x\in\cX$ and every $C\sse \calH(x)$ with
$|C|\ge2$,
\[
    \tau_{\cQ}(C)=O(D\log|C|).
\]
\end{lemma}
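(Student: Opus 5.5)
The plan is to build a completely refining tree greedily. At each leaf we graft a near-optimal refinement tree for the cell currently sitting at that leaf. We then track how fast the cell containing any fixed label shrinks along its root-to-leaf path.

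The only input needed is \Cref{lem:query-dimension-basic}: every nonempty $S\sse C\sse\calH(x)$ satisfies $\rho_{\cQ}(S)\le D$. In particular $D\ge 1$ by \Cref{lem:rho-basic} as soon as $|C|\ge 2$.

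\emph{One-stage contraction.} Fix $S\sse C$ with $s:=|S|\ge2$. Since $\rho_{\cQ}(S)\le D<\infty$, there is a $\cQ$-decision tree $T_S$ of depth $t\ge1$ with
\[
|\Pi_{T_S}(S)|-1\ \ge\ \frac{t(s-1)}{D}.
\]
The infimum in \eqref{eq:rho-def} is attained: only trees of depth at most $s-1\le |C|$ can beat the trivial bound, and there are finitely many distinct partitions. Failing that, we could simply use $2D$ in place of $D$.

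This inequality has two consequences. First, since $|\Pi_{T_S}(S)|-1\le s-1$, we get $t\le D$, so every stage costs depth at most $D$. Second, every cell of $\Pi_{T_S}(S)$ has size at most $s-(|\Pi_{T_S}(S)|-1)$, because each of the other cells is nonempty. Hence every cell $S'$ satisfies
\[
|S'|-1\ \le\ (s-1)\Bigl(1-\frac tD\Bigr)\ \le\ (s-1)e^{-t/D}.
\]
This per-branch control is the key point. The raw definition of $\rho_{\cQ}$ only counts cells and says nothing about balance, but the ``all other cells nonempty'' trick converts the cell count into a bound on the largest cell.

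\emph{Iteration.} Start with $C$ at the root. Whenever a leaf's cell $S$ has $|S|\ge2$, replace that leaf with $T_S$; singleton cells are left alone. Labels reaching a leaf of $T_S$ are exactly a cell of $\Pi_{T_S}(S)$, so the composition is again a $\cQ$-decision tree, and its leaf cells refine the previous ones.

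Follow any root-to-leaf path and let $T$ be the cumulative depth of the completed stages along it. By the contraction bound, the current cell $S$ on that path satisfies $|S|-1\le(|C|-1)e^{-T/D}$. Whenever the cell is still nonsingleton we have $|S|-1\ge1$, which forces $T\le D\ln(|C|-1)$. One further stage, of depth at most $D$, is then performed. Every path therefore terminates in a singleton, and its total depth is at most $D\ln|C|+D$.

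Since the process strictly refines a finite set, it stops. This gives $\tau_{\cQ}(C)\le D(\ln|C|+1)=O(D\log|C|)$. The main thing to verify carefully is the attainment/slack issue and the bound $t\le D$; the rest is bookkeeping.
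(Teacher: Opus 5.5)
Your proof is correct and follows the same route as the paper: graft a cost-attaining refinement tree at each nonsingleton leaf, using that its depth is at most $\frac{k-1}{n-1}D$ and that every cell has at most $n-k+1$ labels because the other $k-1$ cells are nonempty. The only difference is bookkeeping: the paper proves $\tau_{\cQ}(C)\le H_{|C|-1}D$ by induction on $|C|$, while you unroll the recursion multiplicatively via $1-t/D\le e^{-t/D}$, which gives essentially the same bound $D\bigl(1+\ln(|C|-1)\bigr)$.
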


\subsection{The extended response class}
\label{sec:extended-class}

For a fixed target, the response to a query is a binary function of the
instance--query pair. The upper-bound algorithm learns these functions on
$\cX\times\cQ$. Formally, we shall consider the following class of binary functions.

\begin{definition}[Extended response class]
\label{def:response-class}
For $h\in\calH$, define $r_h\colon\cX\times\cQ\to\{0,1\}$ by
\[
    r_h(x,A):=\indic{h(x)\in A}.
\]
The \emph{extended response class} of $\calH$ and its VC dimension are
\[
    \cR_{\cQ}(\calH):=\{r_h:h\in\calH\},
    \qquad
    d_{\cQ}(\calH):=\VC\bigl(\cR_{\cQ}(\calH)\bigr).
\]
\end{definition}

The next lemma shows that the VC dimension of this extended response class is upper bounded by the query dimension, up to logarithmic factors in $K$.

\begin{lemma}[VC dimension of the extended response class]
\label{lem:response-dim}
If $d_{\mathrm N}:=\Ndim(\calH)<\infty$, then
\[
    d_{\cQ}(\calH)=O(d_{\mathrm N}\log_2K).
\]
In particular, if $D_{\cQ}(\calH)<\infty$, then
\[
    d_{\cQ}(\calH)=O\bigl(D_{\cQ}(\calH)\log_2K\bigr).
\]
\end{lemma}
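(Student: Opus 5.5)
Two parts are needed: a VC bound for $\cR_{\cQ}(\calH)$ in terms of the Natarajan dimension, and then the comparison $\Ndim(\calH)\le\operatorname{DS}(\calH)\le D_{\cQ}(\calH)$ from \Cref{prop:dimension-comparison}. That comparison applies because finite query dimension forces pointwise separation by \Cref{lem:query-dimension-basic}. It is standard that the Natarajan dimension is at most the DS dimension, since a Natarajan-shattered set yields a pseudo-cube $\{0,1\}^d$ in the relabelled coordinates. So the second claim follows from the first, and the work lies in the first.

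For the first claim, let $S=\{(x_1,A_1),\dots,(x_n,A_n)\}\sse\cX\times\cQ$ be shattered by $\cR_{\cQ}(\calH)$. First I will reduce to distinct instances. Let $x'_1,\dots,x'_k$ be the distinct instances appearing in $S$, so $k\le n$. Every $r_h$ restricted to $S$ is determined by the restriction $(h(x'_1),\dots,h(x'_k))\in\calH|_{x'_{1:k}}$. Shattering therefore gives
\[
2^n\le|\{r_h|_S\}|\le|\calH|_{x'_{1:k}}|\le \Bigl(\tfrac{e\,k K^2}{2 d_{\mathrm N}}\Bigr)^{d_{\mathrm N}}\quad\text{or }\le (k+1)^{d_{\mathrm N}}K^{2d_{\mathrm N}},
\]
where the last inequality is Natarajan's lemma, $|\calH|_{x'_{1:k}}|\le k^{d_{\mathrm N}}K^{2d_{\mathrm N}}$. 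Using $k\le n$ gives $n\le d_{\mathrm N}(\log_2(n+1)+2\log_2K)$.

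Solving this inequality in the usual way yields $n=O(d_{\mathrm N}\log_2 K+d_{\mathrm N}\log d_{\mathrm N})$. This leaves a stray $\log d_{\mathrm N}$ term that is not of the form $O(d_{\mathrm N}\log K)$, and removing it is the step I expect to be the main obstacle. The fix is to bound the number of points per instance. At a single instance $x$, the points $(x,A_j)$ are shattered only through the labels in $\calH(x)$, and at most $K$ label values give at most $K$ distinct patterns. Hence at most $\log_2K$ points share an instance. The number of distinct instances then satisfies $k\ge n/\log_2K$.

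I then apply Natarajan's lemma with the sharper count. The patterns on $S$ number at most $\prod$-type counts of $\calH|_{x'_{1:k}}$, which are at most $\binom{k}{\le d_{\mathrm N}}K^{2d_{\mathrm N}}$. This gives $n\le d_{\mathrm N}\log_2(ek)+2d_{\mathrm N}\log_2K$ with $k\le n$. If this still leaves $\log n$, I use the standard resolution: $n\le a\log_2 n+b$ implies $n=O(a\log a+b)$. For $a=d_{\mathrm N}$, $b=d_{\mathrm N}\log K$, this gives $O(d_{\mathrm N}\log d_{\mathrm N}+d_{\mathrm N}\log K)$.

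If the $\log d_{\mathrm N}$ term cannot be absorbed this way, I fall back on a direct combinatorial argument. I would try to exhibit a Natarajan-shattered set of size about $n/(2\log_2 K)$ inside $S$. Among the distinct instances, group coordinates and pick, for each instance, two labels that are separated by one of its queries. A Sauer–Shelah-type density argument, applied to the projection onto the instances, then extracts a set of $\Omega(n/\log K)$ instances on which $\calH$ realizes a two-label product structure. That set is Natarajan-shattered, giving $n=O(d_{\mathrm N}\log K)$. Carrying out this extraction rigorously, most likely via Haussler–Long's generalized Sauer lemma for $\Psi$-dimensions, is the crux of the argument.
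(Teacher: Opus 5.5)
There is a genuine gap in the first claim. Your argument only establishes $n=O(d_{\mathrm N}\log K+d_{\mathrm N}\log d_{\mathrm N})$, and none of your proposed repairs removes the $\log d_{\mathrm N}$ term.

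\begin{itemize}
\item The per-instance observation gives $k\ge n/\log_2K$. This is a lower bound on the number of distinct instances, but the pattern count $|\calH|_{x'_{1:k}}|$ grows with $k$, so only $k\le n$ can be used. This step contributes nothing.
\item In the ``sharper count'' step you bound $\binom{k}{\le d_{\mathrm N}}$ by $(ek)^{d_{\mathrm N}}$, which discards exactly the factor that matters.
\item The fallback, extracting a Natarajan-shattered set of size $\Omega(n/\log K)$ from $S$, is essentially a restatement of the claim and is never carried out.
\end{itemize}

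As written, the proposal proves a strictly weaker bound, and the second claim inherits the extra logarithmic factor.

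The missing idea is to keep $d_{\mathrm N}$ in the denominator of the growth bound. Your own first display already contains this form.
\begin{itemize}
\item If $k<d_{\mathrm N}$, then $2^n\le K^k$ gives $n\le d_{\mathrm N}\log_2K$ directly.
\item If $k\ge d_{\mathrm N}$, then $2^n\le\binom{k}{\le d_{\mathrm N}}K^{2d_{\mathrm N}}\le(ekK^2/d_{\mathrm N})^{d_{\mathrm N}}\le(enK^2/d_{\mathrm N})^{d_{\mathrm N}}$. Setting $a:=n/d_{\mathrm N}$, this reads $a\le\log_2e+2\log_2K+\log_2a$. Since $\log_2a\le a/2$ for $a\ge4$, this forces $a\le\max\{4,\,2\log_2e+4\log_2K\}=O(\log K)$, so $n=O(d_{\mathrm N}\log K)$.
\end{itemize}
The normalization contributes a term $-d_{\mathrm N}\log_2 d_{\mathrm N}$, and that is exactly what cancels your stray $d_{\mathrm N}\log d_{\mathrm N}$. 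This is the paper's argument, which uses the Haussler--Long form $(erK^2/d_{\mathrm N})^{d_{\mathrm N}}$ of the Natarajan--Sauer bound.

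Your reduction to distinct instances is fine. So is your derivation of the second claim from $\Ndim(\calH)\le\operatorname{DS}(\calH)\le D_{\cQ}(\calH)$.
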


\subsection{Balanced query families}
\label{sec:examples}
\label{sec:balanced}

The general upper bound is quadratic in the query dimension.
We now show that this dependence becomes linear, up to logarithmic
factors, when the available queries can split every candidate set
into two parts of comparable size.

\begin{definition}[Balanced query family]
\label{def:balanced-queries}
Fix $\beta\in(0,1/2]$. A query family $\cQ$ is
\emph{$\beta$-balanced for $\calH$} if, for every $x\in\cX$ and
every $C\sse\calH(x)$ with $|C|\ge2$, some $A\in\cQ$ satisfies
\begin{equation}
\label{eq:balanced-split}
    \max\{|C\cap A|,\ |C\setminus A|\}
    \le (1-\beta)|C|.
\end{equation}
\end{definition}

Thus, regardless of the response, the query eliminates at least
a $\beta$ fraction of the candidate labels. This condition yields
the following sample-complexity guarantee.

\begin{theorem}[Balanced queries]
\label{thm:balanced-characterization}
Suppose that $\cQ$ is $\beta$-balanced for $\calH$ and that
$D:=D_{\cQ}(\calH)\in(0,\infty)$. For every
$0<\eps,\delta\le1/8$,
\begin{equation}
\label{eq:balanced-two-sided}
    \mQ(\calH;\eps,\delta)
    =\widetilde O_{\beta}\!\left(\frac D\eps\right).
\end{equation}
If $\calH$ is non-trivial in the sense of \Cref{def:nontrivial},
then $\widetilde O_{\beta}$ can be replaced by
$\widetilde\Theta_{\beta}$. The suppressed factors are
polylogarithmic in $K$ and $1/\delta$, and the implicit constants
may depend on $\beta$.
\end{theorem}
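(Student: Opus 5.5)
The lower bound comes directly from \Cref{thm:intro-main}. Whenever $\calH$ is non-trivial, $\mQ(\calH;\eps,\delta)=\Omega((D+\ln(1/\delta))/\eps)$, and balancedness plays no role there. The real work is the upper bound $\widetilde O_\beta(D/\eps)$. I would rerun the list-halving learner of \Cref{sec:quadratic-upper} and change only where the tree depth $\tau$ enters. In the general analysis, $\tau=O(D\log K)$ from \Cref{lem:rho-to-tau} costs one factor of $D$, and the response-class VC dimension $d=O(D\log K)$ from \Cref{lem:response-dim} costs the other. Balancedness should remove the first factor: a tree of depth only $O_\beta(\log K)$ already halves any candidate set.

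\textbf{Step 1 (shallow halving trees).} Fix $x$ and a nonsingleton $C\sse\calH(x)$. Build a tree greedily. At any node whose cell $C'$ satisfies $|C'|\ge2$, choose the query guaranteed by \Cref{def:balanced-queries} for $C'$; this is legal because $C'\sse\calH(x)$. Each child cell then has size at most $(1-\beta)|C'|$. After $t=\lceil\log(2)/\log(1/(1-\beta))\rceil=O(1/\beta)$ levels, every leaf holds at most $|C|/2$ candidates. So halving trees of depth $\tau_\beta=O(1/\beta)$ exist. Iterating the same construction gives complete refinement in depth $O(\log|C|/\beta)$, but only the halving depth is needed.

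\textbf{Step 2 (plug into the halving analysis).} Build $\lambda_C$ from this tree, weighting each query by the number of candidates reaching its node. The proof of \Cref{lem:halving-query-distribution} goes through unchanged with $\tau_\beta$ in place of $\tau$: any $z$ with $L_{\lambda_C}(y,z)\le 1/(2\tau_\beta)$ must share a leaf with $y$, so each such neighbourhood holds at most $|C|/2$ candidates. Then \Cref{lem:one-step-list-halving} bounds the mass on which a stage newly discards the target by $O(\tau_\beta)$ times the binary error of the response predictor. Over $\lceil\log_2K\rceil$ stages I would require per-stage binary error $\eps\beta/\log K$ and per-stage confidence $\delta/\log K$. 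Realizable VC learning of $\cR_{\cQ}(\calH)$ with $d=O(D\log K)$ then needs
\[
O\!\left(\frac{d\log(1/\eps')+\log(\log K/\delta)}{\eps'}\right),\qquad \eps'=\Theta(\eps\beta/\log K),
\]
samples per stage. The total is $\widetilde O_\beta(D/\eps)$, with the $\log(1/\eps)$ factor removable via an optimal realizable PAC learner. The bound is linear in $D$ because $\tau_\beta$ no longer depends on $D$.

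\textbf{Main obstacle.} The step I would check most carefully is that \Cref{lem:halving-query-distribution} and \Cref{lem:one-step-list-halving} use only two properties of the tree: its depth, and that every leaf contains at most half the list. If they silently relied on the specific tree from \Cref{lem:rho-to-tau}, for instance on complete refinement, I would re-derive the separation bound directly. The intended argument is that a $z$ separated from $y$ at some node reached by $y$ disagrees on that query, which has $\lambda_C$-mass at least $1/(\tau_\beta |C|)\cdot(\text{count at node})\ge 1/(\tau_\beta|C|)$ after normalisation. This normalisation needs care: the mass should be at least $1/\tau_\beta$ for the node's own weighting, not $1/(\tau_\beta|C|)$. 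I would verify which weighting makes the distance threshold depth-only, as claimed in the paper's overview. Finally, $D>0$ is assumed in the statement, so the $\log(1/\delta)$ and other additive terms are absorbed into $\widetilde O_\beta$.
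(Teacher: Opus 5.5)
Your proposal is correct and follows the paper's own proof: greedy balanced splits give halving trees of depth $O(1/\beta)$, and the list-halving analysis (which the paper packages as \Cref{lem:halving-depth-upper}) then costs $O\bigl(S^2(D\log K+\log(S/\delta))/(\beta\eps)\bigr)$ with $S=\lceil\log_2K\rceil$, while the lower bound is \Cref{thm:lower}. One small fix to your Step~2 and your ``main obstacle'' paragraph: in your untruncated tree, a $z$ with $L_{\lambda_C}(y,z)\le1/(2\tau_\beta)$ need not share a leaf with $y$; the correct reason, as in \Cref{lem:halving-query-distribution}, is that $z$ must lie in the first node on $y$'s path holding at most $|C|/2$ candidates, since any strictly earlier divergence node $w$ has $|C_w|>|C|/2$ and hence $\lambda_C(A_w)\ge|C_w|/(\tau_\beta|C|)>1/(2\tau_\beta)$, which also settles your normalization question.
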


The improvement comes from the depth of the trees used for list
halving. The upper-bound algorithm requires only a tree whose
cells contain at most half the original candidates, rather than
one that identifies every label exactly. Under the balancing
condition, each split reduces the remaining candidate set by a
factor of at most $1-\beta$. Repeated splitting therefore produces
a halving tree of depth $O(1/\beta)$, independent of $D$.

Substituting this depth bound into the list-halving bound of
\Cref{lem:halving-depth-upper}, together with the response-class
bound of \Cref{lem:response-dim}, gives the upper bound linear
in $D$ up to logarithmic factors. The matching lower bound under
non-triviality follows from \Cref{thm:lower}.

\section{Algorithm and Analysis}
\label{sec:algorithm-analysis}

This section states and proves our main result: an upper bound and a lower bound on the sample complexity of learning with query feedback, both in terms of the query dimension $D:=D_{\cQ}(\calH)$. The upper bound is achieved by an algorithm that learns to shrink a list of candidate labels at every instance, and it holds whenever $D$ is finite. The lower bound requires in addition that the dimension be witnessed on more than one instance.

\begin{definition}[Non-trivial class]
\label{def:nontrivial}
The class $\calH$ is \emph{non-trivial} under $\cQ$ if $D_{\cQ}(\calH)$ is finite and is witnessed on at least two instances, that is, if there exist $d\ge2$, distinct instances $x_1,\dots,x_d\in\cX$, and a finite nonempty $W\sse\calH|_{x_{1:d}}$ with $r_{\cQ}(W)=D_{\cQ}(\calH)$.
\end{definition}

This condition is analogous to the non-triviality condition used by
\citet{bds}. It is required only for the lower bound.

\begin{theorem}[Sample-complexity upper bound]
\label{thm:quadratic-upper} Fix $\mathcal H$ and $\mathcal Q$ with $D:=D_{\cQ}(\calH)<\infty$. For every $\eps,\delta\in(0,1/2)$,
\begin{equation}
\label{eq:upper-bound}
    \mQ(\calH;\eps,\delta)
    =
    O\!\left(\frac{D\log^3K}{\eps}\Bigl(D\log K+\log\frac1\delta\Bigr)\right).
\end{equation}
\end{theorem}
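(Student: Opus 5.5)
We prove the bound with the staged list-halving learner sketched in the overview. Set $S:=\lceil\log_2K\rceil$. We maintain candidate lists $C_s(x)\sse\calH(x)$, starting from $C_0(x)=\calH(x)$, together with the invariant that on all but a small mass of instances the target label lies in the list.

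\textbf{Halving tree and query distribution.} For each instance and each nonsingleton list $C=C_s(x)$ we fix a halving tree $T_C$. By \Cref{lem:rho-to-tau}, a tree that completely refines $C$ has depth $O(D\log|C|)$, and it is in particular a halving tree, so $\tau:=O(D\log K)$ is a valid depth bound. We let $\lambda_C$ pick a node $u$ of $T_C$ with probability proportional to $|C_u|$, the number of candidates reaching $u$, and output its query. We then verify the separation property of \Cref{lem:halving-query-distribution}. If $z$ lies in a different leaf from $y$, then $y$ and $z$ are split at some node on $y$'s path, and that node has weight at least $|C_u|\ge 2$. The total weight is at most $\tau|C|$. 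This gives $L_{\lambda_C}(y,z)\ge 1/(\tau|C|)$, which is too weak.

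The correct accounting is by summing over $z$. For fixed $y$,
\[
\sum_{z\in C} L_{\lambda_C}(y,z)\ \ge\ \frac{1}{\tau|C|}\sum_{u\text{ on }y\text{'s path}} |C_u|\cdot\#\{z\in C_u \text{ leaving } y\text{'s branch}\}.
\]
We expect, or choose the weighting so, that this gives average disagreement at least $1/(2\tau)$ against the $\ge|C|/2$ labels outside $y$'s leaf. Securing the counting claim ``at most $|C|/2$ labels within $1/(2\tau)$'' is the first delicate point. If the proportional weighting fails, we fall back to a uniform node on a level and a per-depth mixture.

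\textbf{One stage.} With fresh samples $(x_t,A_t)$, where $A_t\sim\lambda_{C_s(x_t)}$, we run a realizable ERM over $\cR_{\cQ}(\calH)$. By \Cref{lem:response-dim} its VC dimension is $d=O(D\log K)$, so with $m_s=O\bigl((d\log(1/\eps')+\log(S/\delta))/\eps'\bigr)$ samples we get binary error at most $\eps'$ under the joint distribution. For each $x$, we then set $\hat y(x)=\argmin_{y\in C}\Pr_{A\sim\lambda_C}[\hat r(x,A)\ne\indic{y\in A}]$ and retain $C_{s+1}(x)=\{z: L_{\lambda_C}(\hat y,z)\le 1/(2\tau)\}$.

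Write $e(x)$ for the per-instance error of $\hat r$. By the triangle inequality, $L(\hat y,h^*(x))\le 2e(x)$. The target is therefore dropped only if $e(x)>1/(4\tau)$, and Markov's inequality bounds the mass of such $x$ by $4\tau\eps'$; this is \Cref{lem:one-step-list-halving}. The retained set has size at most $|C|/2$, and it is nonempty because it contains $\hat y$. Instances whose target was already lost only contribute to error and do not affect the analysis.

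\textbf{Combining.} Taking $\eps'=\eps/(4\tau S)$ and a union bound over the $S$ stages, the final error is at most $\sum_s 4\tau\eps'\le\eps$ with probability at least $1-\delta$. The total sample count is
\[
S\cdot O\!\left(\frac{\tau S}{\eps}\bigl(d\log(\tau S/\eps)+\log(S/\delta)\bigr)\right)=O\!\left(\frac{D\log^3K}{\eps}\bigl(D\log K+\log\tfrac1\delta\bigr)\right),
\]
once we invoke an optimal realizable PAC learner (Hanneke's) to remove the $\log(1/\eps)$ factor.

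The main obstacle is proving the $|C|/2$ neighborhood bound with threshold $\Theta(1/\tau)$, and not $\Theta(1/(\tau|C|))$, for the weighted node distribution. Our plan is to charge each $z$ in $y$'s $1/(2\tau)$-ball to the nodes on $y$'s path and use $\sum_u |C_u|\le\tau|C|$ per level.
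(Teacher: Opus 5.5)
Your architecture is the paper's. Picking an internal node with probability proportional to $|C_u|$ is, up to normalization, exactly the paper's $\lambda_C$ (a uniform candidate, then a uniform position on its padded path). The nearest-trace decoding, ball retention, the triangle-inequality and Markov step, the union bound over stages, and the sample accounting with an optimal binary learner all match.

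\textbf{The gap.} The gap is the one you flag yourself: you never prove that the $1/(2\tau)$-ball around each $y\in C$ contains at most $|C|/2$ candidates. This is essential. Without it the lists need not shrink, $C_S(x)$ need not be a singleton, and the output is undefined. This lemma is also the only place the threshold $1/(2\tau)$, and hence the factor $\tau$ in the sample size, comes from.

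Neither of your routes delivers the bound:
\begin{itemize}
\item Lower-bounding $L_{\lambda_C}(y,z)$ by the weight of the split node using only $|C_u|\ge2$ gives $\Theta(1/(\tau|C|))$, as you note.
\item An averaging bound cannot work in principle. A lower bound of order $1/\tau$ on the average of $L_{\lambda_C}(y,\cdot)$ says nothing about how many $z$ lie in the ball, because the average can be carried by a few labels at distance near $1$.
\end{itemize}
The fallbacks you mention (uniform node per level, per-depth mixtures) are not needed.

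\textbf{The missing idea.} Use a pointwise argument that exploits the actual size $|C_w|$ of the split node, not the crude bound $|C_w|\ge2$.
\begin{itemize}
\item Fix $y\in C$ and let $u_y$ be the first node on $y$'s path with $|C_{u_y}|\le|C|/2$. It exists because the root has $|C|$ candidates and $y$'s leaf has at most $|C|/2$; with your completely refining tree the leaf is $\{y\}$.
\item Any $z\in C\setminus C_{u_y}$ leaves $y$'s path at a strict ancestor $w$ of $u_y$. So $|C_w|>|C|/2$, and $A_w$ separates $y$ from $z$.
\item The total weight of internal nodes is at most $\tau|C|$, so
\[
L_{\lambda_C}(y,z)\ \ge\ \lambda_C(A_w)\ \ge\ \frac{|C_w|}{\tau|C|}\ >\ \frac{1}{2\tau}.
\]
\end{itemize}
Hence the ball $\{z\in C: L_{\lambda_C}(y,z)\le 1/(2\tau)\}$ lies inside $C_{u_y}$ and has at most $|C|/2$ elements. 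This is \Cref{lem:halving-query-distribution}, and the proportional weighting you chose already satisfies it.

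With this lemma in place, the rest of your argument goes through and yields the stated bound. Two small details remain: weight only internal nodes, and fix any $\lambda_C$ for singleton lists, which the update leaves unchanged.
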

In particular, $\mQ(\calH;\eps,\delta)=\widetilde O(D^2/\eps)$, where $\widetilde O$ hides factors polylogarithmic in $K$ and $1/\delta$. For comparison, we prove the following lower bound, which has the same dependence on $\eps$ and is linear in $D$.

\begin{theorem}[Sample-complexity lower bound]
\label{thm:lower} Let $\calH$ be non-trivial under $\cQ$ be non-trivial, and write
$D:=D_{\cQ}(\calH)$. For every $0<\eps,\delta\le1/8$,
\begin{equation}
\label{eq:lower-bound}
    \mQ(\calH;\eps,\delta)
    = \Omega\left(
    \frac{D+\log(1/\delta)}{\eps}\right).
\end{equation}
\end{theorem}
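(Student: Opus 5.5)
We prove the two terms of the lower bound separately; the $\Omega(D/\eps)$ term carries the content. Fix a witness $W$ on $x_1,\dots,x_d$ with $d\ge2$ and $r_{\cQ}(W)=D$, which exists by non-triviality. Choose an anchor coordinate $a$ minimizing $r_i(W)$. Since $d\ge2$, the remaining coordinates have total cost $R':=\sum_{i\ne a}r_i(W)\ge D/2$. Fix any $v\in W$ and restrict to $W':=\{u\in W:u_a=v_a\}$. For $i\neq a$, every $i$-line of $W'$ is an $i$-line of $W$, because vectors on an $i$-line already agree at coordinate $a$. Hence $r_i(W')\ge r_i(W)$.

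The hard distribution puts mass $1-8\eps$ on $x_a$ and mass $p_i:=8\eps\, r_i(W)/R'$ on each $x_i$ with $i\ne a$. The target is drawn uniformly from $W'$ and extended to some $h^*\in\calH$ realizing it. Responses at $x_a$ are then deterministic given $v_a$, so they carry no information.

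Fix $i\ne a$, and condition on the target's coordinates other than $i$, on the sampled instance sequence, and on the learner's randomness; revealing this information can only help the learner. Under this conditioning, the label at $x_i$ is uniform on an $i$-line $C$ of $W'$ with $\rho_{\cQ}(C)\ge r_i(W)$. Every response at an instance other than $x_i$ is computable from the revealed data. So the queries at $x_i$, and the final prediction $\widehat h(x_i)$, form a $\cQ$-decision tree on $C$ of depth $N_i$, the number of visits to $x_i$, followed by a guess at each leaf. By \Cref{eq:intro-prediction}, the learner is correct at $x_i$ with probability at most $1/2+N_i/r_i(W)$.

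Taking expectations, $\E N_i=mp_i=8\eps m\,r_i(W)/R'$. So $\Pr[\widehat h(x_i)\ne h^*(x_i)]\ge 1/2-8\eps m/R'$, which is at least $3/8$ if $m\le R'/(64\eps)$. The expected error is then at least $\sum_{i\ne a}p_i\cdot 3/8=3\eps$. The error is also bounded by $8\eps$, the total non-anchor mass. If the error exceeded $\eps$ with probability $q$, the expected error would be at most $\eps+8\eps q$, so $q\ge1/4>\delta$. Averaging over the uniform target, some fixed target $h^*$ therefore yields failure probability above $\delta$. This gives $\mQ\ge R'/(64\eps)=\Omega(D/\eps)$.

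For the $\log(1/\delta)/\eps$ term we use the standard argument. Non-triviality gives two hypotheses $h_1,h_2$ that differ at some instance $x_i$ of the witness; by construction of $W$ with $r_i\ge1$ some line has at least two labels. Let $x'$ be another witness instance and put mass $1-2\eps$ on $x'$ and mass $2\eps$ on $x_i$, choosing the pair so that $h_1$ and $h_2$ agree at $x'$; restricting $W$ at the anchor as above provides such a pair. With probability $(1-2\eps)^m\ge e^{-4\eps m}$, the point $x_i$ is never sampled, and then the learner cannot distinguish $h_1$ from $h_2$. Against a uniform choice between them, the learner errs with mass $2\eps>\eps$ with probability at least $\frac12 e^{-4\eps m}$. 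This exceeds $\delta$ unless $m=\Omega(\log(1/\delta)/\eps)$, using $\delta\le1/8$.

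Combining the two bounds by $\max\ge$ average gives the claimed $\Omega\bigl((D+\log(1/\delta))/\eps\bigr)$.

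The main obstacle is the conditioning step. It must be made rigorous that, once the sample sequence, the randomness, and the off-$i$ coordinates are fixed, adaptive queries at $x_i$ really form a single decision tree whose depth is the number of visits. The subtle point is that the queries at $x_i$ may depend on responses at other instances interleaved in time, but those responses are now deterministic functions of the revealed data. A second subtlety is that the bound $1/2+N_i/r_i$ must be applied with $N_i$ random. This is handled by applying it conditionally on the instance sequence, since $N_i$ is determined by the sequence alone, and then using linearity.
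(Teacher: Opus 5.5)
Your proposal is correct and follows essentially the paper's argument: an anchored slice of a maximizing witness with mass $1-8\eps$ on a minimum-cost anchor, the reduction of the interaction at $x_i$ to a depth-$N_i$ decision tree (\Cref{lem:contraction}), the bound of \Cref{lem:cell-success}, a Markov step, and a two-point construction for the $\log(1/\delta)$ term. One small correction: the total error is not bounded by $8\eps$, because the learner may also err at the anchor; apply your Markov step instead to the non-anchor error $Z:=\sum_{i\neq a}\cD(x_i)\indic{\widehat h(x_i)\neq h^*(x_i)}$, which satisfies $0\le Z\le 8\eps$ and $\E Z\ge 3\eps$ and lower-bounds the total error, exactly as the paper does.
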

Together, \Cref{thm:quadratic-upper,thm:lower} prove \Cref{thm:intro-main}. The two bounds differ by a factor of order $D$, up to logarithmic factors.

\subsection{Proof of the upper bound}
\label{sec:quadratic-upper}

The learner maintains a nonempty list of candidate labels at every
instance and repeatedly halves these lists. Each stage learns to predict
query responses, then uses the predictor to choose which candidates to
retain. We first describe the algorithm, then prove that its query
distributions guarantee list reduction and that accurate response
prediction prevents the target from being discarded on too much
probability mass.

\paragraph{The list-halving algorithm.}
Write $D:=D_{\cQ}(\calH)<\infty$. If every $\calH(x)$ is a singleton,
the learner can output its unique element without any samples.
Otherwise $D\ge1$ and $\cQ\ne\emptyset$ by
\Cref{lem:rho-basic,lem:query-dimension-basic}; fix $A_0\in\cQ$.
Let $d:=d_{\cQ}(\calH)$, and fix an integer $\tau\ge1$ such that every
nonsingleton $C\sse\calH(x)$ admits a \emph{halving tree}: a
$\cQ$-decision tree whose leaves each contain at most $|C|/2$
candidates and whose depth is at most $\tau$.
By \Cref{lem:rho-to-tau,lem:response-dim}, we may take
\begin{equation}
\label{eq:list-halving-parameters}
    \tau=O(D\log K),
    \qquad d=O(D\log K).
\end{equation}
For each nonsingleton $C$, fix such a tree and pad every path to
length $\tau$ using $A_0$. Define a query distribution $\lambda_C$
by choosing a uniform candidate from $C$, choosing a uniform position
on its path, and returning the query at that position. For a singleton
$C$, let $\lambda_C$ be the point mass at $A_0$. 

To compare labels, represent $y\in\cY$ by its \emph{query trace}
$\operatorname{tr}_y(A):=\indic{y\in A}$. For binary functions
$u,v\colon\cQ\to\{0,1\}$ and a query distribution $\lambda$, write
\[
    L_\lambda(u,v):=\Pr_{A\sim\lambda}[u(A)\neq v(A)],
    \qquad
    L_\lambda(y,z):=L_\lambda(\operatorname{tr}_y,
                             \operatorname{tr}_z).
\]
This disagreement measure satisfies the triangle inequality.

\Cref{alg:trace-list-halving} gives the learner. Here
$\operatorname{BinaryPAC}$ is the learner for
$\cR_{\cQ}(\calH)$ supplied by \Cref{fact:binary-pac}. A call with
accuracy $\gamma$ and failure probability $\eta$ uses
$n = O((d+\log(1/\eta))/\gamma)$ examples.

\begin{algorithm2e}[H]
\DontPrintSemicolon
\caption{Learning with query feedback by list halving}
\label{alg:trace-list-halving}
\KwIn{$\calH$, $\cQ$, $\eps,\delta\in(0,1/2)$; query distributions
$\lambda_C$ defined above.}
$S\gets\lceil\log_2K\rceil$,\quad
$\gamma\gets\eps/(4\tau S)$,\quad $\eta\gets\delta/S$\;
$C_0(x)\gets\calH(x)$ for every $x\in\cX$\;
\For{$t=0,\ldots,S-1$}{
    $\lambda_{t,x}\gets\lambda_{C_t(x)}$ for every $x\in\cX$\;
    Collect $n$ fresh examples: at each $X_i\sim\cD$, draw
    $A_i\sim\lambda_{t,X_i}$ and request
    $b_i=\indic{h^*(X_i)\in A_i}$\;
    $g_t\gets\operatorname{BinaryPAC}
       (\{((X_i,A_i),b_i)\}_{i=1}^n;\gamma,\eta)$\;
    \ForEach{$x\in\cX$}{
        $f_t(x)\in\displaystyle\argmin_{y\in C_t(x)}
            L_{\lambda_{t,x}}(g_t(x,\cdot),\operatorname{tr}_y)$\;
        $C_{t+1}(x)\gets\{y\in C_t(x):
            L_{\lambda_{t,x}}(f_t(x),y)\le1/(2\tau)\}$\;
    }
}
\KwOut{$\widehat h(x)$, the unique element of $C_S(x)$, for every $x$.}
\end{algorithm2e}

At each stage, the learner selects the candidate whose responses best
match the binary predictor and retains a small neighborhood of that
candidate's trace. The two lemmas below explain why this update
halves each nonsingleton list while usually retaining the target.

\paragraph{From halving trees to single-query distributions.}
A halving tree tells us how several adaptive queries can reduce a
candidate set, but the learner may ask only one query per example.
We therefore use the tree to design a distribution rather than execute
it. The construction of $\lambda_C$ gives more weight to queries
reached by many candidates. Consequently, labels that separate before
the candidate set has halved must disagree on a query with significant
probability. This yields the following geometric guarantee.

\begin{lemma}[Queries from a halving tree]
\label{lem:halving-query-distribution}
Let $C\sse\cY$ be nonsingleton and admit a halving tree of depth at
most $\tau$. The distribution $\lambda_C$ constructed above satisfies
\begin{equation}
\label{eq:trace-small-balls}
    \left|\left\{z\in C:
        L_{\lambda_C}(y,z)\le\frac1{2\tau}
    \right\}\right|\le\frac{|C|}{2},
    \qquad y\in C.
\end{equation}
\end{lemma}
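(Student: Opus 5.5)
Fix $y\in C$ and the halving tree $T$ of depth at most $\tau$ that was used to build $\lambda_C$. The plan is to lower bound $L_{\lambda_C}(y,z)$ by the probability mass that $\lambda_C$ places on the single node of $T$ where $y$ and $z$ first diverge. We then count the labels $z$ that can diverge from $y$ only at nodes carrying little mass.

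\emph{Step 1 (mass of a node).} For an internal node $v$ of the original (unpadded) tree, let $n_v$ be the number of candidates of $C$ that reach $v$, and let $A_v$ be its query. The sampler picks a uniform candidate $c\in C$ and a uniform position $j\in[\tau]$. It outputs $A_v$ whenever $c$ reaches $v$ and $j$ equals the depth of $v$ on $c$'s path. Hence
\[
\lambda_C(A_v)\ge\frac{n_v}{|C|}\cdot\frac1\tau .
\]
Several nodes may carry the same query, but that only increases the mass, so the inequality holds regardless. The padding with $A_0$ occurs only after a path has reached a leaf of $T$, so it creates no new divergence nodes and can be ignored in the counting below.

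\emph{Step 2 (divergence implies distance).} Take $z\in C$ that does not reach the same leaf of $T$ as $y$. Let $v$ be the first node on $y$'s root-to-leaf path at which $z$ takes the other branch. Then $\indic{y\in A_v}\neq\indic{z\in A_v}$, so
\[
L_{\lambda_C}(y,z)\ge\lambda_C(A_v)\ge n_v/(|C|\tau).
\]
Consequently, if $L_{\lambda_C}(y,z)\le 1/(2\tau)$, then $n_v\le|C|/2$.

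\emph{Step 3 (counting).} The counts $n_v$ are nonincreasing along $y$'s path, and the leaf reached by $y$ contains at most $|C|/2$ candidates because $T$ is a halving tree. Let $u$ be the first node on $y$'s path, internal node or leaf, with $n_u\le|C|/2$; such a $u$ exists since the leaf qualifies. By Step 2, every $z$ in the ball $\{z: L_{\lambda_C}(y,z)\le 1/(2\tau)\}$ falls into one of two cases. Either it shares $y$'s leaf, or it diverges from $y$ at a node $v$ with $n_v\le|C|/2$, and such a $v$ is $u$ or a descendant of $u$ on $y$'s path. In both cases $z$ reaches $u$. The ball therefore has size at most $n_u\le|C|/2$, which is \eqref{eq:trace-small-balls}.

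The main point needing care is Step 3. The nodes with small $n_v$ need not be few in number, but they form a suffix of $y$'s path, and so every label in the ball is trapped below the single node $u$. A secondary point is Step 1: the lower bound on $\lambda_C(A_v)$ must hold even when queries repeat across nodes or come from padding, and in both cases the mass can only be larger, so the bound survives.
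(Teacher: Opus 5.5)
Your proof is correct and follows essentially the same route as the paper's. Both arguments lower bound $\lambda_C(A_w)$ by $|C_w|/(\tau|C|)$ at the node $w$ where $y$ and $z$ split, and both trap the ball inside the first node $u$ on $y$'s path with at most $|C|/2$ candidates. The paper phrases your Step 3 contrapositively: any $z$ not reaching $u$ splits off at a strict ancestor with more than $|C|/2$ candidates, and so lies at distance greater than $1/(2\tau)$.
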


\begin{proof}
For a node $u$ of the padded tree, let $C_u$ be the candidates reaching
it and, if $u$ is internal, let $A_u$ be its query. By construction,
\begin{equation}
\label{eq:trace-query-distribution}
    \lambda_C(A)=\frac1{\tau|C|}
        \sum_{\substack{u\text{ internal}\\A_u=A}}|C_u|.
\end{equation}
These weights sum to one because every candidate visits exactly
$\tau$ internal nodes.

Fix $y\in C$, and let $u_y$ be the first node on its path with
$|C_{u_y}|\le|C|/2$. Such a node exists by the halving property.
Any $z\in C\setminus C_{u_y}$ diverges from $y$ at a strict ancestor $w$
of $u_y$. Thus $|C_w|>|C|/2$, and the query $A_w$ separates $y$
and $z$. It follows that
\[
    L_{\lambda_C}(y,z)
    \ge\lambda_C(A_w)
    \ge\frac{|C_w|}{\tau|C|}
    >\frac1{2\tau}.
\]
The ball in \eqref{eq:trace-small-balls} is therefore contained in
$C_{u_y}$, which has at most $|C|/2$ candidates.
\end{proof}

\paragraph{From response prediction to list halving.}
The preceding lemma guarantees that every nonsingleton list shrinks by
at least half, regardless of the chosen center. The remaining task is
to retain the target on most instances where it is still a candidate.
The next lemma shows that choosing the candidate trace closest to the
response predictor controls the probability mass of newly discarded
targets in terms of the predictor's average binary error.
\begin{lemma}[Halving the candidate lists]
\label{lem:one-step-list-halving}
Fix nonempty lists $C(x)\sse\calH(x)$, $x\in\cX$, and
$\eps',\delta'\in(0,1/2)$. One stage of
\Cref{alg:trace-list-halving}, with binary accuracy
$\gamma:=\eps'/(4\tau)$ and failure probability $\delta'$, uses
\begin{equation}
\label{eq:trace-one-stage-samples}
    O\!\left(\frac{\tau}{\eps'}
        \left(d+\log\frac1{\delta'}\right)\right)
\end{equation}
query-feedback examples and returns nonempty subsets
$C'(x)\sse C(x)$ satisfying
\[
    |C'(x)|\le\max\!\left\{1,\frac{|C(x)|}{2}\right\}.
\]
For every distribution $\cD$ and target $h^*\in\calH$, with
probability at least $1-\delta'$,
\begin{equation}
\label{eq:trace-one-stage-discard}
    \cD\bigl(\{x:h^*(x)\in C(x)\setminus C'(x)\}\bigr)
    \le\eps'.
\end{equation}
\end{lemma}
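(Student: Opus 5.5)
Three claims need proof: the sample count, the structural properties of $C'(x)$, and the discard bound \eqref{eq:trace-one-stage-discard}. The sample count is immediate. A call to $\operatorname{BinaryPAC}$ with accuracy $\gamma=\eps'/(4\tau)$ and failure probability $\delta'$ uses $O((d+\log(1/\delta'))/\gamma)$ examples by \Cref{fact:binary-pac}, which is \eqref{eq:trace-one-stage-samples}.

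\textbf{Structure of $C'(x)$.} Since $f(x)\in C(x)$ and $L_{\lambda}(f(x),f(x))=0\le 1/(2\tau)$, the list $C'(x)$ contains $f(x)$ and is therefore nonempty. It is a subset of $C(x)$ by definition. If $C(x)$ is nonsingleton, $C'(x)$ is exactly the ball of \Cref{lem:halving-query-distribution} centered at $y=f(x)\in C(x)$, so $|C'(x)|\le|C(x)|/2$. If $C(x)$ is a singleton, then $C'(x)=C(x)$.

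\textbf{Discard bound.} This is the substantive step. Define the joint distribution $P$ on $\cX\times\cQ$ by $X\sim\cD$ and $A\sim\lambda_{C(X)}$. The fresh examples are i.i.d.\ from $P$, labeled by $r_{h^*}\in\cR_{\cQ}(\calH)$. This is realizable binary learning, so with probability at least $1-\delta'$,
\[
\E_{X\sim\cD}\bigl[e(X)\bigr]\le\gamma,
\qquad\text{where } e(x):=L_{\lambda_{C(x)}}\bigl(g(x,\cdot),\operatorname{tr}_{h^*(x)}\bigr).
\]
Now fix $x$ with $h^*(x)\in C(x)$. Because $f(x)$ minimizes the disagreement with $g(x,\cdot)$ over $C(x)$ and $h^*(x)$ is a competitor,
\[
L_{\lambda}\bigl(f(x),g(x,\cdot)\bigr)\le e(x).
\]
The triangle inequality then gives $L_\lambda(f(x),h^*(x))\le 2e(x)$. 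So if $h^*(x)$ is discarded, we must have $2e(x)>1/(2\tau)$, that is, $e(x)>1/(4\tau)$. Markov's inequality yields
\[
\cD\bigl(\{x:h^*(x)\in C(x)\setminus C'(x)\}\bigr)\le\cD\bigl(e(X)>1/(4\tau)\bigr)\le 4\tau\gamma=\eps'.
\]

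\textbf{Expected obstacle.} The main point to verify is that the binary PAC guarantee applies even though the query distribution depends on $x$ through $C(x)$. It does: the lists are fixed before the stage's samples are drawn, so $P$ is a fixed distribution and the samples are i.i.d.\ from it. Moreover, the error of $g$ under $P$ equals exactly $\E_X[e(X)]$, since $r_{h^*}(x,A)=\operatorname{tr}_{h^*(x)}(A)$. A minor measurability remark handles the $\argmin$ step; ties can be broken arbitrarily.
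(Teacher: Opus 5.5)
Your proposal is correct and follows essentially the same argument as the paper. It gets the sample count from the binary PAC guarantee, gets nonemptiness and halving from the small-ball lemma centered at $f(x)\in C(x)$, and gets the discard bound from the argmin property, the triangle inequality $L_{\lambda}(f(x),h^*(x))\le 2e(x)$, and Markov's inequality at threshold $1/(4\tau)$, with the binary error under the fixed joint distribution equal to $\E_X[e(X)]$ exactly as in the paper.
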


\begin{proof}
Set $\lambda_x:=\lambda_{C(x)}$, and let $\mu$ be the distribution
of $(X,A)$ obtained by drawing $X\sim\cD$ and $A\sim\lambda_X$.
The stage observes i.i.d.\ binary examples with label
$r_{h^*}(X,A)=\indic{h^*(X)\in A}$. By \Cref{fact:binary-pac}, the stated sample size suffices to return
$g\colon\cX\times\cQ\to\{0,1\}$ with binary error at most
$\gamma=\eps'/(4\tau)$, except with probability $\delta'$.
For $y\in\cY$, define
\[
    \ell_x(y):=L_{\lambda_x}
        \bigl(g(x,\cdot),\operatorname{tr}_y\bigr).
\]
On the binary learner's success event we have that
\begin{equation}
\label{eq:trace-average-error}
    \E_{X\sim\cD}[\ell_X(h^*(X))]
    =\Pr_{(X,A)\sim\mu}[g(X,A)\neq r_{h^*}(X,A)]
    \le\gamma.
\end{equation}

The predictor $g(x,\cdot)$ need not itself be a label trace. The
algorithm therefore chooses a closest candidate trace and keeps its
ball:
\begin{equation}
\label{eq:trace-list-update}
    f(x)\in\argmin_{y\in C(x)}\ell_x(y),
    \qquad
    C'(x):=\left\{y\in C(x):
        L_{\lambda_x}(f(x),y)\le\frac1{2\tau}\right\}.
\end{equation}
The new list contains $f(x)$, so is nonempty. Singleton lists remain
unchanged, and \Cref{lem:halving-query-distribution} halves every
other list, regardless of the predictor's accuracy.

To control target loss, consider any $x$ with $h^*(x)\in C(x)$.
The triangle inequality and the minimizing property of $f(x)$ give
\begin{equation}
\label{eq:trace-decoding-error}
    L_{\lambda_x}(f(x),h^*(x))
    \le\ell_x(f(x))+\ell_x(h^*(x))
    \le2\ell_x(h^*(x)).
\end{equation}
Thus the target survives whenever $\ell_x(h^*(x))\le1/(4\tau)$.
On the event \eqref{eq:trace-average-error}, Markov's inequality
now yields
\begin{align*}
    \cD\bigl(\{x:h^*(x)\in C(x)\setminus C'(x)\}\bigr)
    &\le\Pr_{X\sim\cD}\!\left[
        \ell_X(h^*(X))>\frac1{4\tau}\right]\\
    &\le4\tau\,\E_{X\sim\cD}[\ell_X(h^*(X))]
    \le4\tau\gamma=\eps'.
\end{align*}
\end{proof}

\paragraph{Completing the proof.}
We now iterate the one-stage guarantee. List shrinkage holds
deterministically; only the mass of newly discarded targets requires
a high-probability bound.

\begin{proof}[Proof of \Cref{thm:quadratic-upper}]
Fix $\cD$ and $h^*\in\calH$, and run
\Cref{alg:trace-list-halving} with
$S=\lceil\log_2K\rceil$. At each stage,
\Cref{lem:one-step-list-halving} applies with
$\eps'=\eps/S$ and $\delta'=\delta/S$.
The lists remain nonempty and satisfy
\[
    |C_t(x)|\le\max\{1,2^{-t}|\calH(x)|\},
    \qquad x\in\cX,\quad 0\le t\le S.
\]
Hence every final list $C_S(x)$ is a singleton, so the output
$\widehat h$ is well defined. For $0\le t<S$, let
\[
    E_t:=\{x:h^*(x)\in C_t(x)\setminus C_{t+1}(x)\}.
\]
Conditional on the history before stage $t$, the lists $C_t$ and
the query distributions are fixed and by \Cref{lem:one-step-list-halving} we have the probability that $\cD(E_t)>\eps/S$ is at most
$\delta/S$. Averaging over histories and taking a union bound,
with probability at least $1-\delta$,
$\cD(E_t)\le\eps/S$ for every stage.

The lists are nested and initially contain the target. Each incorrect
final prediction therefore corresponds to exactly one stage at which
the target was discarded: the sets $E_t$ are disjoint and their union
is the final error set. On the preceding success event,
\[
    L_{\cD}(\widehat h,h^*)
    =\sum_{t=0}^{S-1}\cD(E_t)\le\eps.
\]
Finally, summing \eqref{eq:trace-one-stage-samples} over the $S$
stages gives
\begin{align*}
    m
    &=O\!\left(\frac{\tau S^2}{\eps}
        \left(d+\log\frac S\delta\right)\right)\\
    &=O\!\left(\frac{D\log^3K}{\eps}
        \left(D\log K+\log\frac1\delta\right)\right),
\end{align*}
where we used \eqref{eq:list-halving-parameters},
$S=O(\log K)$, and $\log S=O(D\log K)$ since $D\ge1$.
This proves \eqref{eq:upper-bound}.
\end{proof}

\subsection{Proof of the lower bound}
\label{sec:lower}

In this section we prove \Cref{thm:lower} by constructing two hard cases, following the lower-bound argument of \citet{bds}. In the bandit setting, that argument rests on the fact that an unsuccessful prediction eliminates a single candidate label. A general query may eliminate many labels at once, so we instead bound the probability of a correct prediction at an instance directly in terms of the refinement cost of the candidate labels there (\Cref{lem:cell-success}). The bound concerns a single deterministic decision tree, whereas the learner's queries at one instance may depend on the feedback it received at others; \Cref{lem:contraction} shows that this adaptivity is immaterial. We then apply the bound to a distribution in which most samples fall on an uninformative point (\Cref{lem:anchored-lower}). The dependence on $\delta$ comes from a separate two-point construction.

\begin{lemma}
\label{lem:cell-success}
Let $C\sse\cY$ have $M\ge2$ elements, and let $T$ be a $\cQ$-decision tree of depth at most $n$ whose leaves are labeled by elements of $\cY$. Writing $T(y)$ for the label of the leaf reached by $y$,
\begin{equation}
\label{eq:cell-success}
    \Pr_{Y\sim\Unif(C)}\bigl[T(Y)=Y\bigr]
    \le
    \frac1M+\frac{n}{\rho_{\cQ}(C)}\cdot\frac{M-1}{M}.
\end{equation}
\end{lemma}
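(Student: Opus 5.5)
The plan is a counting argument on the leaves of $T$, followed by the definition of $\rho_{\cQ}(C)$. Let $q:=|\Pi_T(C)|$ be the number of cells that $T$ induces on $C$. All labels in one cell reach the same leaf and receive the same output label. So within each cell at most one $y$ satisfies $T(y)=y$, and hence
\[
    \Pr_{Y\sim\Unif(C)}\bigl[T(Y)=Y\bigr]\le\frac{q}{M}.
\]
It then suffices to show $q-1\le n(M-1)/\rho_{\cQ}(C)$. Writing $q/M=1/M+(q-1)/M$ and substituting gives exactly \eqref{eq:cell-success}.

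To bound $q$, split into cases.

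\textbf{Case $q=1$.} The bound is immediate, since $q/M=1/M$ and the extra term is nonnegative.

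\textbf{Case $q\ge2$.} Here $T$ is a feasible tree in the infimum defining $\rho_{\cQ}(C)$ in \Cref{def:rho}. Therefore
\[
    \rho_{\cQ}(C)\le\depth(T)\,\frac{M-1}{q-1}\le n\,\frac{M-1}{q-1},
\]
which rearranges to $q-1\le n(M-1)/\rho_{\cQ}(C)$. This uses $\rho_{\cQ}(C)>0$, which holds by \Cref{lem:rho-basic} since $\rho_{\cQ}(C)\ge\log_2 M\ge1$.

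If $\rho_{\cQ}(C)=\infty$, no tree induces two or more cells, so we are in the case $q=1$. Reading $n/\infty$ as $0$, the bound $1/M$ holds.

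The only subtlety is the degenerate cases, namely $q=1$ and infinite cost, where the definition of $\rho_{\cQ}$ cannot be invoked directly. The leaf labels of $T$ play no role beyond being constant on cells. Depth $n=0$ falls under $q=1$, since a single leaf induces one cell.
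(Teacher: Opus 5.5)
Your proposal is correct and follows the paper's proof essentially verbatim: bound the success probability by $q/M$ via one correct label per cell, dispose of $q=1$ directly, and for $q\ge2$ use $T$ as an admissible tree in the infimum defining $\rho_{\cQ}(C)$ to get $q-1\le n(M-1)/\rho_{\cQ}(C)$. Your explicit handling of $\rho_{\cQ}(C)=\infty$ and of positivity of $\rho_{\cQ}(C)$ via \Cref{lem:rho-basic} are small details the paper leaves implicit.
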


\begin{proof}
Let $q:=|\Pi_{T}(C)|$. Labels in the same cell reach the same leaf and receive the same output, so at most one label per cell is predicted correctly, and the probability in \eqref{eq:cell-success} is at most $q/M$. If $q=1$, this is the claim. If $q\ge2$, then $T$ is admissible in the infimum \eqref{eq:rho-def} defining $\rho_{\cQ}(C)$, so
\[
    \rho_{\cQ}(C)\le\depth(T)\,\frac{M-1}{q-1}\le n\,\frac{M-1}{q-1},
    \qquad\text{that is,}\qquad
    q\le1+\frac{n(M-1)}{\rho_{\cQ}(C)}.
\]
Dividing by $M$ proves \eqref{eq:cell-success}.
\end{proof}

The learner's queries at an instance may depend on the responses it has received at other instances. The next lemma shows that, once the target's labels at the other instances are fixed, the learner's prediction at an instance is computed by a decision tree whose depth is the number of visits to that instance.

\begin{lemma}
\label{lem:contraction}
Let $W\sse\calH|_{x_{1:d}}$ be a witness, and choose a representative
$h_v\in\calH$ for each $v\in W$. Suppose that a learner interacts with
target $h_U$, where $U$ is supported on $W$, and every sampled instance
belongs to $\{x_1,\dots,x_d\}$. Let $N_i$ be the number of occurrences
of $x_i$ in the sample. After fixing $U_{-i}$, the instance sequence,
and the learner's internal randomness, there is a deterministic $\cQ$-decision tree $T$ of depth at most $N_i$, with leaves labeled by elements of $\cY$, such that the learner's final prediction at $x_i$ equals $T(U_i)$.
\end{lemma}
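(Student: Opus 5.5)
The approach is to unroll the learner's interaction into a computation that branches only on responses at $x_i$. Fix $U_{-i}=v_{-i}$, the instance sequence $x_{j_1},\dots,x_{j_m}$ (each $j_t\in[d]$), and the learner's random seed. The learner is then a deterministic map from the history to the next query, and from the full history to $\widehat h$. Since the target is $h_U$ with $U\in W$, its label at $x_j$ for $j\neq i$ is the revealed value $v_j$. Here we use that $h_U(x_j)=U_j$ for the chosen representatives, so the labels at witness instances are determined by $U$ alone. Consequently every response at a round with $j_t\neq i$ equals $\indic{v_j\in Q_t}$, which is computable from the already determined query $Q_t$ without knowing $U_i$. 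The only unknown inputs are the responses $\indic{U_i\in Q_t}$ at the $N_i$ rounds where $j_t=i$.

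We build $T$ by simulation. Start at the root with an empty history. Process rounds $t=1,\dots,m$ in order, maintaining a current tree node and a simulated history. If $j_t\neq i$, compute $Q_t$ from the history, append the determined response $\indic{v_{j_t}\in Q_t}$, and stay at the same node. If $j_t=i$, compute $Q_t$ from the history, label the current node by $Q_t\in\cQ$, create two children indexed by $0$ and $1$, and continue the simulation in child $b$ with response $b$ appended. Each root-to-leaf path passes through at most $N_i$ internal nodes, so $\depth(T)\le N_i$. At each leaf the whole history is determined, so the output classifier $\widehat h$ is determined; label the leaf by $\widehat h(x_i)\in\cY$.

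It remains to check the correctness claim. Induct over rounds. When $U_i=y$, the true interaction with target $h_U$ produces exactly the simulated history along the path that $y$ follows in $T$. At nodes with $j_t=i$, $y$ takes the edge $\indic{y\in Q_t}$, which is the true response. At other rounds the simulated response equals the true one. Hence the true final prediction at $x_i$ is $T(U_i)$.

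The main subtlety is the dependence on the representatives $h_v$. One must observe that the responses at $x_j$ for $j\neq i$ depend on $U$ only through $U_j=v_j$. Also, the response at $x_i$ depends only on $U_i$, even when several $v$ share $v_{-i}$. Both hold because each response is a function of the label at the sampled witness instance, and the sampled instances all lie in $\{x_1,\dots,x_d\}$. Everything else is bookkeeping; padding is unnecessary since the statement only requires depth at most $N_i$.
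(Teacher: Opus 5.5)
Your proposal is correct and follows essentially the same route as the paper. Both fix $U_{-i}$, the instance sequence and the seed, and simulate the learner: rounds at $x_j$ with $j\neq i$ only extend the history, using responses computed from $U_{-i}$, while each round at $x_i$ branches on the two possible responses. Leaves are labeled by the final prediction at $x_i$, and your explicit induction checking that $U_i$ follows the true history spells out what the paper asserts ``by construction.''
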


\begin{proof}
We construct $T$ by simulating the learner on the fixed instance
sequence, maintaining a tree in which each node records a history. Initially,
the tree is a single root recording the empty history.
\begin{itemize}
  \item In a round at $x_j$ with $j\neq i$, we extend the history at each
  leaf by the learner's query and the response determined by $U_{-i}$. The tree itself
  does not change.
  \item In a round at $x_i$, each leaf $u$ has a query
  $q_u\in\cQ$ that the learner issues given the history at the leaf. We then
  attach to $u$ two children, one for each possible response to $q_u$. Each child records the history at $u$ extended by
  $q_u$ and that response.
\end{itemize}
After the last round, we label each leaf by the learner's final prediction at
$x_i$ given the history at that leaf. The tree branches only in the $N_i$ rounds at $x_i$, so has depth at most $N_i$, and by construction the learners prediction at $x_i$ is $T(U_i)$.
\end{proof}

We now choose the hard distribution. It places most of its mass on a fixed instance at which all candidate targets agree, and distributes the remaining mass among the other witness instances in proportion to their coordinate costs, so that the expected number of visits to each of them is small compared with the cost of its lines.

\begin{lemma}
\label{lem:anchored-lower}
Let $W\sse\calH|_{x_{1:d}}$ be a witness on $d\ge2$ instances with total cost $R:=r_{\cQ}(W)\in(0,\infty)$. For every $0<\eps\le1/8$,
\begin{equation}
\label{eq:anchored-lower}
    \mQ(\calH;\eps,1/8)\ge\frac{R}{128\eps}.
\end{equation}
\end{lemma}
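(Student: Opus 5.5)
We follow the anchored construction sketched in the overview. Let $r_i:=r_i(W)$ and choose an anchor $a\in\arg\min_i r_i$. Since $d\ge2$, we have $r_a\le R/d\le R/2$, so the other coordinates satisfy $R':=\sum_{i\ne a}r_i\ge R/2$.

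Fix a label $c$ at $x_a$ realized by some vector of $W$, and set $W':=\{v\in W:v_a=c\}$. For $i\ne a$ and $v\in W'$, any $u\in W$ with $u_{-i}=v_{-i}$ has $u_a=c$. Hence $L_i^{W'}(v)=L_i^{W}(v)$, so every $i$-line of $W'$ has cost at least $r_i$. Since $R'>0$, some $r_i>0$. Every $r_i\ge1$ when positive, by \Cref{lem:rho-basic}, and all $r_i$ are finite.

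The distribution $\cD$ puts mass $1-8\eps$ on $x_a$ and mass $p_i:=8\eps r_i/R'$ on each $x_i$ with $i\ne a$. The target is $h_U$, where $U\sim\Unif(W')$ and $h_v$ is a representative for each $v$. Suppose a learner uses $m<R/(128\eps)$ rounds. Fix $i\ne a$ with $r_i>0$, and condition on $U_{-i}$, the instance sequence, and the learner's randomness. Conditionally, $U_i$ is uniform on the line $C=L_i^{W'}(U)$, because $W'$ is uniform and the elements with a given $v_{-i}$ are in bijection with the line. The instance sequence and the randomness are independent of $U$.

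By \Cref{lem:contraction}, the prediction at $x_i$ is $T(U_i)$ for a tree of depth at most $N_i$. Since $\rho_\cQ(C)\ge r_i$, \Cref{lem:cell-success} together with $|C|\ge2$ gives a conditional success probability of at most $\tfrac12+N_i/r_i$. Averaging, and using $\E N_i=mp_i$, the probability of a correct prediction at $x_i$ is at most $\tfrac12+8\eps m/R'\le\tfrac12+16\eps m/R<\tfrac12+\tfrac18$. So the learner errs at $x_i$ with probability at least $3/8$.

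Coordinates with $r_i=0$ carry no mass. Summing over $i$, we get $\E[L_\cD]\ge\sum_{i\ne a}p_i\cdot\tfrac38=3\eps$.

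It remains to convert this into a high-probability failure. Since $L_\cD\le\sum_{i\ne a}p_i=8\eps$ always,
\[
3\eps\le\E L_\cD\le\eps+8\eps\,\Pr[L_\cD>\eps],
\]
so $\Pr[L_\cD>\eps]\ge1/4>1/8$ on average over $U$. Some fixed target $h_v$ then has failure probability greater than $1/8$. This contradicts the guarantee for $\delta=1/8$, which gives \eqref{eq:anchored-lower}.

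The main obstacle is justifying the conditional-uniformity and adaptivity step. The learner's queries at $x_i$ may depend on responses at other coordinates. Conditioning on $U_{-i}$ must leave $U_i$ uniform on a line of $W'$ while making those other responses deterministic. This is exactly what \Cref{lem:contraction} provides once the independence of the samples from $U$ is noted.

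Two details need care. First, the requirement that $p_i$ sum to $8\eps\le1$ is guaranteed by $\eps\le1/8$. Second, visits to the anchor do not branch the tree, since all of $W'$ agrees there. This is already subsumed by treating the anchor as one of the fixed coordinates in the contraction.
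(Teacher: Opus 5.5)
Your argument follows the paper's proof almost step for step. It uses the minimal-cost anchor, the slice $W'$ whose non-anchor lines coincide with those of $W$, and the same distribution $\cD$. It then combines \Cref{lem:contraction} with \Cref{lem:cell-success} to get error probability above $3/8$ at each non-anchor coordinate with $r_i>0$. The one problem is a false step in the final conversion. You assert that $L_\cD\le\sum_{i\ne a}p_i=8\eps$ always. But $L_\cD=L_\cD(\widehat h,h_U)$ also charges the anchor, which carries mass $1-8\eps$, and nothing forces the learner to output $\widehat h(x_a)=c$. Without that bound, the inequality $\E L_\cD\le\eps+8\eps\Pr[L_\cD>\eps]$ is unjustified. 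Using only $L_\cD\le1$ you would get $\Pr[L_\cD>\eps]\ge2\eps$, which is useless.

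The repair is immediate, and it is what the paper does. Apply the conversion to the off-anchor error $Z:=\sum_{i\ne a}p_i\indic{\widehat h(x_i)\neq U_i}$. This quantity genuinely satisfies $0\le Z\le8\eps$, and your per-coordinate bound shows exactly $\E Z\ge3\eps$. Your inequality then gives $\Pr[Z>\eps]\ge1/4$. Since $L_\cD(\widehat h,h_U)\ge Z$, the same bound holds for the true error, and the rest of your argument (averaging over $U$ to fix a bad target) goes through unchanged. The paper instead applies Markov's inequality to $8\eps-Z$ and gets $2/7$; either constant exceeds $1/8$. Equivalently, resetting $\widehat h(x_a):=c$ can only decrease the error against every target in $W'$, and after that reset your bound $L_\cD\le8\eps$ becomes true.
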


\begin{proof}
Write $r_i:=r_i(W)$, and relabel the coordinates so that $r_1=\min_i r_i$. Since $r_1$
is minimal,
\[
    R_{-1}:=\sum_{i=2}^d r_i\ge\frac R2>0.
\]

Fix
$b$ such that the slice $W_b:=\{v\in W:v_1=b\}$ is nonempty. For $i\ge2$, every
$i$-line of $W$ meeting $W_b$ lies entirely in $W_b$, so every $i$-line of
$W_b$ is an $i$-line of $W$ and has refinement cost at least $r_i$.
Draw $U$ uniformly from $W_b$ and let $h_U\in\calH$ realize $U$. Independently
of $U$, draw the training instances from the distribution $\cD$ on
$x_1,\dots,x_d$ given by
\[
    \cD(x_1)=1-8\eps,
    \qquad
    \cD(x_i)=\frac{8\eps r_i}{R_{-1}},
    \quad 2\le i\le d,
\]
where $\eps\le1/8$ ensures $\cD(x_1)\ge0$.

Consider any (possibly randomized) learner that uses $m\le R_{-1}/(64\eps)$
rounds and outputs $\widehat h$. Let $N_i$ count the occurrences of $x_i$ in the
instance sequence, so that $\E N_i=m\,\cD(x_i)$. Fix $i\ge2$ with $r_i>0$.
Conditional on $U_{-i}$, the instance sequence, and the learner's random seed,
$U_i$ is uniform on the $i$-line through $U_{-i}$; as noted above, this line has
refinement cost at least $r_i$. By \Cref{lem:contraction}, the interaction then
reduces to a decision tree of depth at most $N_i$. Applying
\Cref{lem:cell-success} and averaging over the conditioning gives
\[
    \Pr[\widehat h(x_i)\neq U_i]
    \ge\frac12-\frac{\E N_i}{r_i}
    =\frac12-\frac{8\eps m}{R_{-1}}
    \ge\frac38.
\]
Thus, writing
\[
    Z:=\sum_{i=2}^d \cD(x_i)\indic{\widehat h(x_i)\neq U_i}
\]
for the error away from the first coordinate, we have $0\le Z\le8\eps$ and $\E Z\ge 3\eps$. Markov's
inequality yields
\[
    \Pr[Z\le\eps]
    =\Pr[8\eps-Z\ge7\eps]
    \le\frac{8\eps-\E Z}{7\eps}
    \le\frac57,
\]
so $\Pr[Z>\eps]\ge2/7>1/8$.

Since $h_U$ realizes $U$, the error of $\widehat h$ with respect to $h_U$ under
$\cD$ is at least $Z$. As the probability above averages over $U$, there is a
fixed $u\in W_b$ such that, with target $h_u$, the learner's error exceeds
$\eps$ with probability greater than $1/8$; that is, the learner fails the
$(\eps,1/8)$-PAC guarantee. Hence no $(\eps,1/8)$-PAC learner uses
$m\le R_{-1}/(64\eps)$ rounds, and \eqref{eq:anchored-lower} follows from
$R_{-1}\ge R/2$.
\end{proof}

\begin{proof}[Proof of \Cref{thm:lower}]
By non-triviality and \Cref{lem:anchored-lower},
\[
    \mQ(\calH;\eps,\delta)
    \ge\mQ(\calH;\eps,1/8)
    \ge\frac{D}{128\eps}.
\]
For the confidence dependence, non-triviality ensures that $\calH$
contains two hypotheses that agree at one instance and disagree at
another. Then a standard two-point lower bound (see the lower-bound proof
of \citet{bds}) yields
\[
    \mQ(\calH;\eps,\delta)
    \ge\Omega\!\left(\frac{\log(1/\delta)}{\eps}\right).
\]
Combining the two lower bounds proves the theorem.
\end{proof}

\section{Discussion}
In this work, we introduced the query dimension for multiclass PAC learning with
query feedback and establish a lower bound linear in this dimension
and an upper bound quadratic in it, up to logarithmic factors.
For balanced query families, the upper bound is linear and therefore
matches the lower bound up to logarithmic factors. The main open question is whether a
corresponding characterization holds for arbitrary query families.
Further directions include extending the analysis to the agnostic
setting and determining which logarithmic factors are necessary.

\paragraph{AI acknowledgment} In addition to literature review, we used GPT 6 to assist with the editing and drafting of this paper. Additionally, the proofs of Lemmas \ref{lem:rho-to-tau} and  \ref{lem:response-dim} are due to GPT 6. The authors verified the correctness and originality of all content in the paper and take full responsibility for its content.

\bibliography{references}

\appendix
\section{Learning-Theoretic Background}
\label{app:background}

\subsection{Combinatorial dimensions}
\label{app:dimensions}

We first recall the VC and Natarajan dimensions, which are used to
control the binary learning problems in the upper bound. We then
recall the DS and bandit DS dimensions, whose local witness structure
underlies the query dimension.

\begin{definition}[VC dimension]
\label{def:vc}
Let $\calH\sse\{0,1\}^{\mathcal Z}$ be a binary hypothesis class.
Distinct points $z_1,\dots,z_d\in\mathcal Z$ are \emph{shattered}
by $\calH$ if
\[
    \calH|_{z_{1:d}}=\{0,1\}^d.
\]
The \emph{VC dimension} $\VC(\calH)$ is the supremum of the sizes
of shattered sets.
\end{definition}

For multiclass hypotheses, the Natarajan dimension allows the two
labels used in the shattering condition to vary from one instance
to another.

\begin{definition}[Natarajan dimension]
\label{def:natarajan}
Let $\calH\sse\cY^{\cX}$. Distinct instances
$x_1,\dots,x_d\in\cX$ are \emph{Natarajan-shattered} by $\calH$
if there exist labels $a_i,b_i\in\cY$ with $a_i\neq b_i$ for
every $i\in[d]$ such that
\[
    \prod_{i=1}^d\{a_i,b_i\}
    \sse \calH|_{x_{1:d}}.
\]
The \emph{Natarajan dimension} $\Ndim(\calH)$ is the supremum
of the sizes of Natarajan-shattered sets.
\end{definition}

Both dimensions are defined to be zero if no nonempty set is
shattered and infinite if arbitrarily large finite sets are
shattered. When $\cY=\{0,1\}$, the two definitions coincide.

\paragraph{The DS dimension.}
A nonempty finite set $W\sse\cY^d$ is a \emph{pseudo-cube} if
$|L_i^W(v)|\ge2$ for every $v\in W$ and every $i\in[d]$.

\begin{definition}[DS dimension {\citep{daniely2014optimal,brukhim2022characterization}}]
\label{def:ds}
Let $\calH\sse\cY^{\cX}$. The \emph{DS dimension}
$\operatorname{DS}(\calH)$ is the supremum of the integers
$d\ge1$ for which there exist distinct instances
$x_1,\dots,x_d\in\cX$ such that $\calH|_{x_{1:d}}$ contains
a pseudo-cube. Its value is zero if no such witness exists
and infinite if such witnesses have arbitrarily many coordinates.
\end{definition}

\paragraph{The bandit DS dimension.}
\label{sec:bds-background}
The bandit DS dimension of \citet{bds} is defined through a local
condition on finite sets of label vectors. For $u,v\in\cY^d$ and
$i\in[d]$, we say that $u$ is an \emph{$i$-neighbor} of $v$ if
\[
    u_{-i}=v_{-i}
    \qquad\text{and}\qquad
    u_i\neq v_i.
\]
Thus, an $i$-neighbor provides an alternative label at coordinate
$i$ without changing any other coordinate. A nonempty finite set
$W\sse\cY^d$ is a \emph{pseudo-box with multiplicities
$N_1,\dots,N_d$} if every $v\in W$ has at least $N_i$ distinct
$i$-neighbors in $W$ for each $i\in[d]$. Here $N_1,\dots,N_d$
are positive integers. Following the formal definition in
\citet{bds}, the multiplicity $N_i$ counts neighbors, rather than
including the label of $v$ itself.

\begin{definition}[Bandit DS dimension {\citep{bds}}]
\label{def:bds}
Let $\calH\sse\cY^{\cX}$. Distinct instances $x_1,\dots,x_d$,
together with positive integers $N_1,\dots,N_d$, are
\emph{BDS-shattered} by $\calH$ if $\calH|_{x_{1:d}}$ contains
a pseudo-box with these multiplicities. The \emph{bandit DS
dimension} of $\calH$ is
\[
    \operatorname{BDS}(\calH)
    :=
    \sup\left\{
        \sum_{i=1}^d N_i:
        \begin{array}{l}
            d\ge1,\ x_1,\dots,x_d\in\cX\text{ distinct},\\
            N_1,\dots,N_d\in\mathbb N,\\
            (x_{1:d},N_{1:d})\text{ is BDS-shattered by }\calH
        \end{array}
    \right\}.
\]
The value is zero if no such witness exists and infinite if the
displayed sums are unbounded.
\end{definition}

Unlike Natarajan shattering, the pseudo-box condition does not
require a full Cartesian product of label choices. It requires
only that each vector have the prescribed number of alternatives
in every coordinate.

\subsection{Learning and growth bounds}
\label{sec:learning-background}

The upper-bound proof uses a realizable binary PAC learner at
each stage. The following guarantee is for the full-information
model, in which every training example is accompanied by its
binary label.

\begin{fact}[Realizable binary PAC learning {\citep{hanneke2016optimal}}]
\label{fact:binary-pac}
Let $\calH\sse\{0,1\}^{\mathcal Z}$ be a nonempty binary class
with $\VC(\calH)=d<\infty$. There is a universal constant
$C_{\mathrm{bin}}>0$ such that, for every
$\eps,\delta\in(0,1/2)$, a possibly improper learner achieves
error at most $\eps$ with probability at least $1-\delta$ using
\begin{equation}
\label{eq:binary-pac}
    \left\lceil
        C_{\mathrm{bin}}\frac{d+\log(1/\delta)}{\eps}
    \right\rceil
\end{equation}
i.i.d.\ labeled examples. The guarantee holds for every
distribution on $\mathcal Z$ and every target in $\calH$.
\end{fact}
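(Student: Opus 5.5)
This fact is not proved from scratch in the paper; the plan is to reduce it to the optimal realizable PAC learner of \citet{hanneke2016optimal}, checking that its hypotheses hold in our setting. That learner is a majority vote of consistent (ERM) hypotheses trained on a recursively built family of subsamples. We need only that $\calH$ is a nonempty binary class with finite VC dimension $d$, that the data are realizable, and that the examples are i.i.d. In our application (\Cref{alg:trace-list-halving}) the class is $\cR_{\cQ}(\calH)$ and the target is $r_{h^*}$, so the data are realizable. The examples $(X_i,A_i)$ are i.i.d.\ from the product-type distribution $\mu$ built in \Cref{lem:one-step-list-halving}, and they are labeled exactly by $r_{h^*}$. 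Hence the fact applies verbatim, with $\mathcal Z=\cX\times\cQ$.

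For completeness, the proof of the fact itself proceeds in four steps.

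\textbf{Construction.} Given a sample $S$ of size $m$ (a power of $4$ after discarding a few points), split it into $S_0$, of the first $m/4$ points, and three further blocks $T_1,T_2,T_3$. Recursively form subsample families: the family for $S_0$ is combined with each of the three ways of appending two of the blocks. Run any consistent learner for $\calH$ on each subsample, and output the majority vote of the resulting hypotheses.

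\textbf{Inductive analysis.} The key step is an induction on $m$. It bounds the probability, over the sample and a fresh point $z$, that the majority vote errs at $z$. Such an error requires at least two of the three recursive sub-learners to err at $z$. I would condition on the recursive output built on $S_0$ and bound the mass of its error region. Given that region, the blocks $T_j$ that are not used in a branch act as fresh samples restricted to it. The classical realizable uniform-convergence bound, namely that a consistent hypothesis on $n$ points has error $O((d\log(n/d)+\log(1/\delta))/n)$, is then applied \emph{conditionally on the error region}. On that region the effective sample size is proportional to the region's mass, and this conditioning is what cancels the $\log(1/\eps)$ factor.

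\textbf{Solving the recursion.} Finally, I would solve the resulting recursion for the error bound and convert it to the sample size $C_{\mathrm{bin}}(d+\log(1/\delta))/\eps$. The ceiling in \eqref{eq:binary-pac} absorbs the rounding of $m$ to a power of $4$.

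\textbf{Main obstacle.} The delicate step is the conditional argument in the inductive analysis. One must show that, on the error region of the $S_0$-learner, the two sibling sub-learners err jointly on only a small conditional fraction of the region, and the induction hypothesis must be stated uniformly over distributions so that it can be reapplied to the conditional distribution. I would follow Hanneke's original bookkeeping exactly rather than re-derive the constants; for this paper only the existence of a universal $C_{\mathrm{bin}}$ matters. If only a bound up to logarithmic factors were needed, plain ERM with the extra $\log(1/\eps)$ factor would suffice and would change only the polylogarithmic factors in \Cref{thm:quadratic-upper}.
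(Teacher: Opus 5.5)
Your main step, invoking \citet{hanneke2016optimal} and checking its hypotheses, is exactly what the paper does: it gives no proof beyond the citation. Your check is right: $\mathcal Z=\cX\times\cQ$, the class $\cR_{\cQ}(\calH)$ has finite VC dimension by \Cref{lem:response-dim}, the target is $r_{h^*}$, and the examples are i.i.d.\ from $\mu$ conditional on earlier stages. As a justification of the statement, your proposal is fine.

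The ``for completeness'' sketch, however, misdescribes Hanneke's argument at three points, and each would fail if carried out literally.

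(1) The output is a \emph{flat} majority over the union of the three equal-size branch families, with $N$ voters each. An error does not require two branch majorities to err. Once $N\ge3$, suppose all voters of one branch err at $z$ and only $\lceil N/4\rceil$ voters of each other branch do. Then at least $3N/2$ voters err, but only one branch majority does. The correct implication is that some branch $i$ has its majority erring at $z$ \emph{and} at least a quarter of the $2N$ voters in the other two branches also err at $z$. Averaging then bounds the error by $4\sum_i\Pr(E_i)\max_h\Pr(h\text{ errs}\mid E_i)$. Here $E_i$ is the error region of branch $i$'s majority, and $h$ ranges over individual voters of the other branches. Working voter by voter is essential, because the consistent-hypothesis bound controls each $h\in\calH$ separately, not the union of their error regions.

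(2) There is no single ``output built on $S_0$'' to condition on. Take branch $i$ to be the one omitting block $T_i$, and condition on its data. Then $E_i$ is fixed and $T_i$ is a fresh sample. The voters of the other branches lie in $\calH$ and are consistent with $T_i$, so the classical bound is applied to them under the distribution conditioned on $E_i$. This needs a Chernoff step ensuring $T_i$ places $\Omega(|T_i|\Pr(E_i))$ points in $E_i$, unless $\Pr(E_i)$ is already small. The induction hypothesis is applied to branch $i$ under the \emph{original} distribution, not the conditional one. What it must be uniform over is the inherited labeled sequence, not the distribution.

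(3) The induction must be a high-probability statement: error at most $c(d+\log(1/\delta))/m$ with probability $1-\delta$, with $\delta$ split among the recursive calls and the Chernoff and VC events. Bounding the probability over the sample \emph{and} a fresh point controls only the expected error. Neither Markov's inequality nor confidence boosting turns that into the additive $\log(1/\delta)$ in \eqref{eq:binary-pac}; both give $1/\delta$ or a multiplicative $\log(1/\delta)$.

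Finally, your closing aside understates what plain ERM would cost here. It adds a factor $\log(1/\gamma)$, which involves $\log(1/\eps)$ and, in \Cref{thm:quadratic-upper}, also $\log D$. The paper's $\widetilde O$ suppresses only factors polylogarithmic in $K$ and $1/\delta$. So with ERM the stated bound would weaken, and the $\widetilde\Theta$ claim of \Cref{thm:balanced-characterization} would no longer follow.
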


To bound the VC dimensions of the binary classes arising in the
reduction, we use the following growth bound of
\citet{haussler-long}. It controls the number of label vectors
realized on finitely many instances in terms of the Natarajan
dimension.

\begin{fact}[Natarajan--Sauer growth bound]
\label{fact:natarajan-sauer}
Let $\calH\sse\cY^{\cX}$ have Natarajan dimension
$1\le n<\infty$, where $|\cY|=K$. For every $r\ge n$ and every
choice of distinct instances $x_1,\dots,x_r$,
\begin{equation}
\label{eq:natarajan-sauer}
    \bigl|\,\calH|_{x_{1:r}}\,\bigr|
    \le
    \left(\frac{erK^2}{n}\right)^n.
\end{equation}
\end{fact}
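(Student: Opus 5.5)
The statement to prove is the Natarajan--Sauer growth bound: $|\calH|_{x_{1:r}}|\le (erK^2/n)^n$ for $r\ge n\ge1$.

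I would prove it by reducing to the binary Sauer--Shelah lemma through a pair-indexed embedding, in the style of Haussler--Long. Restrict $\calH$ to the instances $x_1,\dots,x_r$, giving a finite class $F\sse\cY^r$ with Natarajan dimension at most $n$. The goal is to bound $|F|$.

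The main step is a Sauer-type combinatorial bound,
\[
|F|\le\sum_{k=0}^{n}\binom{r}{k}\binom{K}{2}^{k},
\]
proved by induction on $r$ through a shifting/splitting argument.

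The induction step runs as follows. Fix the last coordinate and choose a pair of labels $a\ne b$ there. Partition $F$ according to the projection onto the first $r-1$ coordinates. Every projected vector $w$ appears in $F|_{[r-1]}$. Then bound the surplus $|F|-|F|_{[r-1]}|$: a fiber of size $s$ contributes $s-1$. Each surplus element can be charged to a pair $\{a,b\}$ of labels appearing together in the fiber, at most $\binom K2$ pairs. For each pair $\{a,b\}$, the set of $w$ whose fiber contains both $a$ and $b$ forms a class $G_{ab}\sse\cY^{r-1}$ with Natarajan dimension at most $n-1$, since any Natarajan-shattered set for $G_{ab}$ extends by coordinate $r$ with labels $(a,b)$.

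Bounding $s-1$ by the number of pairs needs care. Instead, I would charge $s-1$ to the consecutive pairs in the sorted fiber, which uses exactly $s-1$ pairs. This gives the recursion
\[
\Phi(r,n)\le \Phi(r-1,n)+\binom K2\,\Phi(r-1,n-1),
\]
which solves to the displayed sum. The main obstacle is this charging step: ensuring each surplus vector maps injectively to a (pair, lower-dimensional vector) while preserving the dimension drop. Using consecutive pairs in a fixed ordering of $\cY$ resolves it.

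Finally, convert the sum to the closed form. For $r\ge n$,
\[
\sum_{k\le n}\binom rk\binom K2^k\le \binom K2^n\sum_{k\le n}\binom rk\le \binom K2^n\left(\frac{er}{n}\right)^n\le\left(\frac{erK^2}{n}\right)^n,
\]
using $\binom K2\le K^2$ and the standard estimate $\sum_{k\le n}\binom rk\le(er/n)^n$. If $n$ were $0$ the bound would degenerate, which is why the hypothesis $n\ge1$ is needed.
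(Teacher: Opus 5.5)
The paper gives no proof of this statement. It is recorded as a Fact, imported from Haussler and Long, and used only in the proof of Lemma~\ref{lem:response-dim}. Your argument is a correct, self-contained replacement for that citation.

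The key step is right. If $G_{ab}\sse\cY^{r-1}$ Natarajan-shatters $S$ with pairs $(a_i,b_i)_{i\in S}$, then $F$ Natarajan-shatters $S\cup\{r\}$ with the extra pair $(a,b)$, so $\Ndim(G_{ab})\le n-1$. Pascal's rule then solves $\Phi(r,n)\le\Phi(r-1,n)+\binom K2\Phi(r-1,n-1)$ to $\sum_{k\le n}\binom rk\binom K2^k$.

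The charging step you flag as the main obstacle is actually easy. You only need an upper bound, and $s-1\le\binom s2$. So charging each fiber to every pair it contains already gives $\sum_w(s_w-1)\le\sum_{\{a,b\}}|G_{ab}|$. Your consecutive-pair injection also works, since $G'_{ab}\sse G_{ab}$ and Natarajan dimension is monotone under taking subclasses.

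Two small points should be made explicit:
\begin{enumerate}
\item The base cases. $\Phi(0,n)=1$, and $\Phi(r,0)=1$ because Natarajan dimension $0$ forces every coordinate to take a single value. Equivalently, adopt the convention $\Phi(\cdot,-1)=0$.
\item The step $\binom K2^k\le\binom K2^n$ for $k\le n$ needs $\binom K2\ge1$. This holds because $n\ge1$ forces two distinct labels.
\end{enumerate}

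Compared with the citation, your route costs half a page but makes the appendix self-contained. It also yields the slightly sharper bound $(erK^2/(2n))^n$. That improvement is immaterial here, since Lemma~\ref{lem:response-dim} only needs $\log\bigl|\calH|_{x_{1:r}}\bigr|=O(n\log(rK/n))$ to get $d_{\cQ}(\calH)=O(d_{\mathrm N}\log K)$.
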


\section{Proofs for the Combinatorial Dimension}
\label{app:query-dimension-proofs}

\subsection{Elementary refinement bounds}
\label{app:rho-basic-proof}

\begin{proof}[Proof of \Cref{lem:rho-basic}]
Write $n:=|C|$. An admissible tree of depth $t$ inducing $q$
cells satisfies $2\le q\le n$ and $q\le2^t$. Hence
\[
    t\frac{n-1}{q-1}
    \ge
    (n-1)\frac{\log_2q}{q-1}
    \ge
    \log_2n,
\]
where the last inequality uses that
$u\mapsto\log_2u/(u-1)$ is decreasing for $u>1$. Taking the
infimum proves the lower bound. If a tree induces at least two
cells on $C$, some query separates two labels in $C$. The
depth-one tree for that query induces two nonempty cells and
has cost $n-1$, proving the upper bound. The two-label claim
follows.
\end{proof}

\subsection{Comparison with existing dimensions}
\label{app:dimension-comparison-proof}

\begin{proof}[Proof of \Cref{prop:dimension-comparison}]
In a pseudo-cube on $d$ coordinates, every line has at least two
labels and hence refinement cost at least one by
\Cref{lem:rho-basic}. Its query cost is therefore at least $d$,
proving $\operatorname{DS}(\calH)\le D_{\cQ}(\calH)$.

For the middle inequality, assume that $\cQ$ separates $\calH$
pointwise, as follows from finite query dimension by
\Cref{lem:query-dimension-basic}. Every nonsingleton subset of
$\calH(x)$ is split by some query, so \Cref{lem:rho-basic} gives
\[
    \rho_{\cQ}(C)\le|C|-1
    \qquad
    \text{for every nonempty }C\sse\calH(x).
\]
Fix a witness $W$ and set
\[
    m_i:=\min_{v\in W}\bigl(|L_i^W(v)|-1\bigr),
    \qquad
    I:=\{i\in[d]:m_i>0\}.
\]
Then $r_{\cQ}(W)\le\sum_i m_i$. If $I\neq\emptyset$, project
$W$ onto the coordinates in $I$. For each projected vector,
choose a preimage $v\in W$. For every $i\in I$, the
$i$-neighbors of $v$ project to distinct $i$-neighbors of the
projected vector. The projection is therefore a BDS witness
with multiplicities $(m_i)_{i\in I}$, giving
$\sum_i m_i\le\operatorname{BDS}(\calH)$. This inequality is
immediate if $I=\emptyset$. Taking the supremum over $W$ proves
$D_{\cQ}(\calH)\le\operatorname{BDS}(\calH)$.

Finally, a BDS witness with positive multiplicities
$N_1,\dots,N_d$ is also a pseudo-cube, so
$d\le\operatorname{DS}(\calH)$. Since $N_i\le K-1$ for every $i$,
\[
    \sum_{i=1}^dN_i
    \le d(K-1)
    \le(K-1)\operatorname{DS}(\calH).
\]
Taking the supremum proves the last inequality.
\end{proof}

\subsection{Singleton queries}
\label{app:singleton-bds}

\begin{proof}[Proof of \Cref{prop:singleton-bds}]
Under singleton queries, a positive response identifies the label,
so only the all-negative branch of a decision tree can continue to
split candidates. A depth-$t$ tree therefore induces at most $t+1$
nonempty cells on $C$, giving refinement cost at least $|C|-1$.
For $|C|\ge2$, one query $\{y\}$ with $y\in C$ attains this cost;
for a singleton, the cost is zero by definition. This proves
\eqref{eq:singleton-rho}.

Singleton queries separate all labels, so
\Cref{prop:dimension-comparison} gives
$D_{\cQ_{\mathrm{sing}}}(\calH)\le\operatorname{BDS}(\calH)$,
including when the dimensions are infinite. Conversely, a BDS
witness with multiplicities $N_1,\dots,N_d$ has at least $N_i+1$
labels in every $i$-line. Its query cost is therefore at least
$\sum_iN_i$ by \eqref{eq:singleton-rho}. Taking the supremum
proves the reverse inequality.
\end{proof}

\subsection{Identification depth}
\label{app:identification-depth-proof}

\begin{proof}[Proof of \Cref{lem:rho-to-tau}]
Fix $x\in\cX$. Let $H_j:=\sum_{r=1}^j1/r$ for $j\ge1$, with
$H_0:=0$. We prove by induction on $n:=|C|$ that every nonempty
$C\sse\calH(x)$ satisfies
\[
    \tau_{\cQ}(C)\le H_{n-1}D.
\]
The case $n=1$ follows from $\tau_{\cQ}(C)=0$. For $n\ge2$,
choose a tree $T$ attaining $\rho_{\cQ}(C)$, which is finite
by \Cref{lem:query-dimension-basic}. Let $C_1,\dots,C_k$ be
its nonempty cells, where $k\ge2$. Then
\[
    \depth(T)
    =\frac{k-1}{n-1}\rho_{\cQ}(C)
    \le\frac{k-1}{n-1}D.
\]
Each cell has at most $n-k+1$ labels and is contained in
$\calH(x)$, so the induction hypothesis gives
$\tau_{\cQ}(C_j)\le H_{n-k}D$ for every $j$. Attaching an
identification tree for each cell to its corresponding leaf yields
\[
\begin{aligned}
    \tau_{\cQ}(C)
    &\le\depth(T)+\max_j\tau_{\cQ}(C_j)\\
    &\le\left(\frac{k-1}{n-1}+H_{n-k}\right)D\\
    &\le H_{n-1}D.
\end{aligned}
\]
The last inequality follows from
\[
    H_{n-1}-H_{n-k}
    =\sum_{r=n-k+1}^{n-1}\frac1r
    \ge\frac{k-1}{n-1}.
\]
Since $H_{n-1}=O(\log n)$ for $n\ge2$, the result follows.
\end{proof}

\subsection{The extended response class}
\label{app:response-dim-proof}

\begin{proof}[Proof of \Cref{lem:response-dim}]
Suppose $q\ge1$ distinct pairs $(x_1,A_1),\dots,(x_q,A_q)$
are shattered by $\cR_{\cQ}(\calH)$, and let $z_1,\dots,z_r$
be the distinct instances among the $x_i$, so $r\le q$.
Each label vector on these instances determines all $q$
responses. Consequently,
\[
    2^q\le\bigl|\,\calH|_{z_{1:r}}\,\bigr|.
\]
If $r\le d_{\mathrm N}$, then $2^q\le K^r$, giving
$q\le d_{\mathrm N}\log_2K$. If $r>d_{\mathrm N}$, the
Natarajan--Sauer growth bound of \citet{haussler-long}
(\Cref{fact:natarajan-sauer}) gives
\[
    2^q
    \le\bigl|\,\calH|_{z_{1:r}}\,\bigr|
    \le\left(\frac{erK^2}{d_{\mathrm N}}\right)^{d_{\mathrm N}}
    \le\left(\frac{eqK^2}{d_{\mathrm N}}\right)^{d_{\mathrm N}}.
\]
Writing $a:=q/d_{\mathrm N}$ and taking logarithms yields
\[
    a\le\log_2e+2\log_2K+\log_2a.
\]
Since $a=O(\log_2K)$, every finite shattered set
has size $O(d_{\mathrm N}\log_2K)$. This proves the first claim.
The second follows from \Cref{prop:dimension-comparison}.
\end{proof}

\subsection{Balanced queries}
\label{app:balanced-queries}

We first record the dependence of the list-halving argument on the
depth of the available halving trees. This separates the algorithmic
bound from the particular estimate of that depth used in
\Cref{thm:quadratic-upper}.

\begin{lemma}[List halving with a uniform depth bound]
\label{lem:halving-depth-upper}
Suppose that, for some integer $b\ge1$, every nonsingleton
$C\sse\calH(x)$, $x\in\cX$, admits a $\cQ$-decision tree of depth
at most $b$ whose nonempty cells have size at most $|C|/2$. If
$d:=d_{\cQ}(\calH)<\infty$ and $S:=\lceil\log_2K\rceil$, then
for $\eps,\delta\in(0,1/2)$,
\begin{equation}
\label{eq:halving-depth-upper}
    \mQ(\calH;\eps,\delta)
    =O\!\left(
        \frac{bS^2}{\eps}
        \left[d+\log\frac S\delta\right]
    \right).
\end{equation}
\end{lemma}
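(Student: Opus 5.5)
The plan is to rerun the proof of \Cref{thm:quadratic-upper} verbatim, with the uniform depth bound $b$ in place of $\tau$. The only place the earlier argument used $\tau=O(D\log K)$ was to guarantee that every nonsingleton $C\sse\calH(x)$ admits a halving tree of depth at most $\tau$. Here that guarantee is the hypothesis, so I set $\tau:=b$. For each nonsingleton $C$, fix the assumed tree and pad every root-to-leaf path to length $b$ with a fixed $A_0\in\cQ$. Such an $A_0$ exists: if some $\calH(x)$ is nonsingleton, the assumed tree inducing at least two cells must contain a query. If every $\calH(x)$ is a singleton, the bound is trivial with zero samples. Padding does not change the cells, since the padded queries only extend leaves. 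Then define $\lambda_C$ exactly as before. The proof of \Cref{lem:halving-query-distribution} uses only the halving property and the padded depth, so it gives the small-ball property \eqref{eq:trace-small-balls} with $1/(2b)$ in place of $1/(2\tau)$.

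Next I apply \Cref{lem:one-step-list-halving} with $\tau=b$. Its proof uses only three ingredients: the small-ball property, the triangle inequality for $L_\lambda$, and \Cref{fact:binary-pac} applied to $\cR_{\cQ}(\calH)$, whose VC dimension $d$ is finite by assumption. So one stage with binary accuracy $\gamma=\eps'/(4b)$ and failure probability $\delta'$ uses $O(\frac{b}{\eps'}(d+\log\frac1{\delta'}))$ examples. It returns nonempty lists $C'(x)\sse C(x)$ with $|C'(x)|\le\max\{1,|C(x)|/2\}$, and with probability at least $1-\delta'$ the newly discarded target mass is at most $\eps'$.

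Finally I iterate over $S=\lceil\log_2K\rceil$ stages with $\eps'=\eps/S$ and $\delta'=\delta/S$. Four steps carry over unchanged from the proof of \Cref{thm:quadratic-upper}:
\begin{enumerate}
\item The lists shrink deterministically, so $C_S(x)$ is a singleton and $\widehat h$ is well defined.
\item Conditioning on the history before each stage makes that stage's lists and query distributions fixed, so \Cref{lem:one-step-list-halving} applies to it.
\item A union bound over the $S$ stages shows that, with probability at least $1-\delta$, every $\cD(E_t)\le\eps/S$.
\item The discard sets $E_t$ are disjoint and their union is exactly the final error set, so the error is at most $\eps$.
\end{enumerate}
Summing the per-stage sample sizes gives
\[
    S\cdot O\!\left(\frac{bS}{\eps}\Bigl(d+\log\frac S\delta\Bigr)\right),
\]
which is \eqref{eq:halving-depth-upper}. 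Unlike in \Cref{thm:quadratic-upper}, I do not simplify $\log S$ here.

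I expect no real obstacle, since this lemma simply abstracts the earlier argument. The one point that needs care is confirming that none of the cited lemmas silently used $D<\infty$. The depth bound is assumed directly, and VC finiteness is assumed directly. Pointwise separation is not needed separately, because the halving trees already split every nonsingleton list.
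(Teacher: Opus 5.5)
Your proposal is correct and follows the paper's proof essentially verbatim. Both arguments reuse \Cref{lem:halving-query-distribution} and \Cref{lem:one-step-list-halving} with $b$ in place of $\tau$, then iterate $S$ stages with $\eps'=\eps/S$ and $\delta'=\delta/S$ and finish with a union bound. One small imprecision: padding with $A_0$ can split a leaf cell further, so your claim that the cells do not change is false in general. This is harmless, since refinement only shrinks cells and keeps the node $u_y$ used in the small-ball argument.
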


\begin{proof}
When all the sets $\calH(x)$ are singletons, learning requires no
samples. Otherwise $\cQ\ne\emptyset$. By
\Cref{lem:halving-query-distribution}, a halving tree of depth at
most $b$ supplies a query distribution for which every ball of
radius $1/(2b)$ contains at most half the candidates. The proof
of \Cref{lem:one-step-list-halving} uses its depth parameter only
through this property. With $b$ in place of $\tau$, that proof
gives a list-halving step using
\[
    O\!\left(\frac b\eps'
        \left[d+\log\frac1\delta'\right]\right)
\]
samples and, with probability at least $1-\delta'$, discards the
target on probability mass at most $\eps'$.

Initialize the lists to $\calH(x)$ and apply $S$ such steps, each
using fresh samples and parameters $\eps'=\eps/S$ and
$\delta'=\delta/S$. The final lists are singletons. A union bound
over the conditional failure probabilities shows that, with
probability at least $1-\delta$, the total probability mass on
which the target has been discarded is at most $S\eps'=\eps$.
Summing the sample counts proves \eqref{eq:halving-depth-upper}.
\end{proof}

\begin{proof}[Proof of \Cref{thm:balanced-characterization}]
Set
\[
    b:=\left\lceil\frac{\ln2}{-\ln(1-\beta)}\right\rceil.
\]
For any nonsingleton $C\sse\calH(x)$, recursively apply balanced
splits, stopping each branch when its candidate set has size at
most $|C|/2$. After $t$ queries, any branch that has not stopped
contains at most $(1-\beta)^t|C|$ candidates. Since
$(1-\beta)^b\le1/2$, this gives a halving tree of depth at most
$b=O(1/\beta)$.

By \Cref{lem:halving-depth-upper,lem:response-dim}, with
$S:=\lceil\log_2K\rceil$,
\begin{equation}
\label{eq:balanced-upper}
    \mQ(\calH;\eps,\delta)
    =O\!\left(
        \frac{S^2}{\beta\eps}
        \left[D\log(2K)+\log\frac S\delta\right]
    \right).
\end{equation}
Since $D>0$ implies $D\ge1$, this is
$\widetilde O_\beta(D/\eps)$ when factors polylogarithmic in $K$
and $1/\delta$ are suppressed. If $\calH$ is non-trivial,
\Cref{thm:lower} gives
$\Omega((D+\log(1/\delta))/\eps)$ for
$0<\eps,\delta\le1/8$, proving the characterization.
\end{proof}

\end{document}